\def\tr{\mathop{\text{tr}}\kern.2ex}
\def\diver{\mathop{\text{div}}}
\long\def\comment#1{}
\def\tr{\mathop{\text{Tr}}}
\def\dr{\displaystyle \rm}
\newcommand{\bel}{\begin{eqnarray}\label}
\newcommand{\eel}{\end{eqnarray}}
\newcommand{\bes}{\begin{eqnarray*}}
\newcommand{\ees}{\end{eqnarray*}}
\def\real{{\mathbb{R}}}
\def\R{{\real}}
\newcommand{\la}{\langle}
\newcommand{\ra}{\rangle}
\newcommand{\normmm}[1]{{\left\vert\kern-0.25ex\left\vert\kern-0.25ex\left\vert #1 
    \right\vert\kern-0.25ex\right\vert\kern-0.25ex\right\vert}}
\def\##1\#{\begin{align}#1\end{align}}
\def\$#1\${\begin{align*}#1\end{align*}}
\begin{document}

\title{\huge Accelerating Nonconvex Learning via Replica Exchange Langevin Diffusion}
\author[1]{Yi Chen}
\author[2]{Jinglin Chen}
\author[3]{Jing Dong }
\author[2]{Jian Peng}
\author[1]{Zhaoran Wang}
\affil[1]{Northwestern University}
\affil[2]{University of Illinois at Urbana-Champaign}
\affil[3]{Columbia Business School}

\date{}
\maketitle

\begin{abstract}
Langevin diffusion is a powerful tool for nonconvex optimization problems, which can be used to find the global minima. However, the standard Langevin diffusion driven by a single temperature suffers from the tradeoff between ``global exploration'' and ``local exploitation'', corresponding the high and low temperatures, respectively. In order to bridge such a gap, we propose to use the replica exchange Langevin diffusion for the purpose of nonconvex optimization, where two Langevin diffusions run simultaneously with positions swapping. We show that, compared with the standard Langevin diffusion, replica exchange enables us to approach the global minima faster through accelerating the convergence of Langevin diffusion.  We also propose a novel optimization algorithm by discretizing the replica exchange Langevin diffusion.           
\end{abstract}


\section{Introduction}
We consider the problem of minimizing a nonconvex objective function, which arises from numerous machine learning problems such as training neural networks. However, due to the existence of spurious local minima, nonconvex optimization remains challenging in both theory and practice. To overcome this difficulty, one idea is to construct a diffusion process whose invariant distribution concentrates around the global minima \citep{chiang1987diffusion}. If we run such a diffusion process for a sufficiently long time, we expect to draw from its invariant distribution a point that is close to a global minimum with high probability.

One commonly used diffusion process is the Langevin diffusion, which is closely related to first-order optimization \citep{gidas1985global}. In specific, the Langevin diffusion can be viewed as gradient flow with injected noise, where the noise is scaled by a temperature parameter. Such a temperature parameter gives rise to a tradeoff between two opposite effects, namely ``global exploration'' and ``local exploitation''. More specifically, a higher temperature results in a larger injected noise, which increases the probability of escaping from spurious local minima and facilitates the global exploration of the whole domain. On the other hand, with a lower temperature, the Langevin diffusion behaves more like the gradient flow, which focuses more on exploiting the local geometry to decrease the objective value, yielding a more concentrated invariant distribution. 

In this paper, we aim to bridge the gap between ``global exploration'' and ``local exploitation'' in nonconvex optimization with replica exchange, which originates from parallel tempering Markov chain Monte Carlo methods for sampling from a multimodal distribution \citep{earl2005parallel}. In contrast to the standard Langevin diffusion driven by a single temperature, the replica exchange Langevin diffusion consists of two Langevin diffusions with low and high temperatures, which locally exploits the geometry and globally explores the domain, respectively. In particular, the two Langevin diffusions exchange their positions following a specific swapping scheme, which ensures that the invariant distribution concentrates around the global minima. See Figures \ref{fg}.(a)-(b) for an illustration of swapping. Compared with the standard Langevin diffusion, replica exchange accelerates the rate of convergence to the invariant distribution. As a result, with replica exchange we obtain a more accurate solution to the nonconvex optimization problem within a shorter time. It is worth mentioning that, although in this paper we focuses on the case of swapping two Langevin diffusions, our theory and method naturally extend to multiple Langevin diffusions.

Our theory quantifies the acceleration effect of replica exchange from two perspectives. First, we use the theory of Markov semigroup, particularly the Dirichlet form, to characterize the evolution of the $\chi^2$-divergence between the distribution at time $t$ and the invariant distribution. In specific, we show that, compared with the standard Langevin diffusion, replica exchange boosts the Dirichlet form with a nonnegative term, which results in a faster decay of the $\chi^2$-divergence. Further combined with the Poincar\'e inequality, we establish the exponential rate of convergence of the $\chi^2$-divergence to zero, where the acceleration effect of replica exchange is characterized by the Poincar\'e constant. We illustrate such an acceleration effect in Figure \ref{fg}.(c). Second, we use the large deviation principle (LDP) to characterize the exponential decay of the probability that the empirical measure of the sample path deviates from the invariant distribution. In particular, we show that replica exchange boosts the LDP rate function, which implies a faster exponential decay rate. See Figure \ref{fg}.(d) for an illustration. Both perspectives are built on the  infinitesimal generator of the replica exchange Langevin diffusion. We show that essentially the acceleration effects are the consequences of an additional term in the infinitesimal generator, which is introduced by replica exchange. In addition to quantifying the acceleration effect of replica exchange, we use the Euler scheme to discretize the replica exchange Langevin diffusion and propose a discrete-time algorithm. We establish an upper bound of the discretization error via Gr\"onwall's inequality. We defer the discretization analysis to \S\ref{sec:disc} of the appendix. To support our theoretical findings, we illustrate the empirical advantage of replica exchange with numerical experiments. We also discuss several issues related to the parameters of the replica exchange Langevin diffusion, especially the swapping intensity and temperatures, in \S\ref{DC} of the appendix.    

\begin{figure*} 
\centering
\begin{tabular}{cccc}
\includegraphics[height=0.2\textwidth]{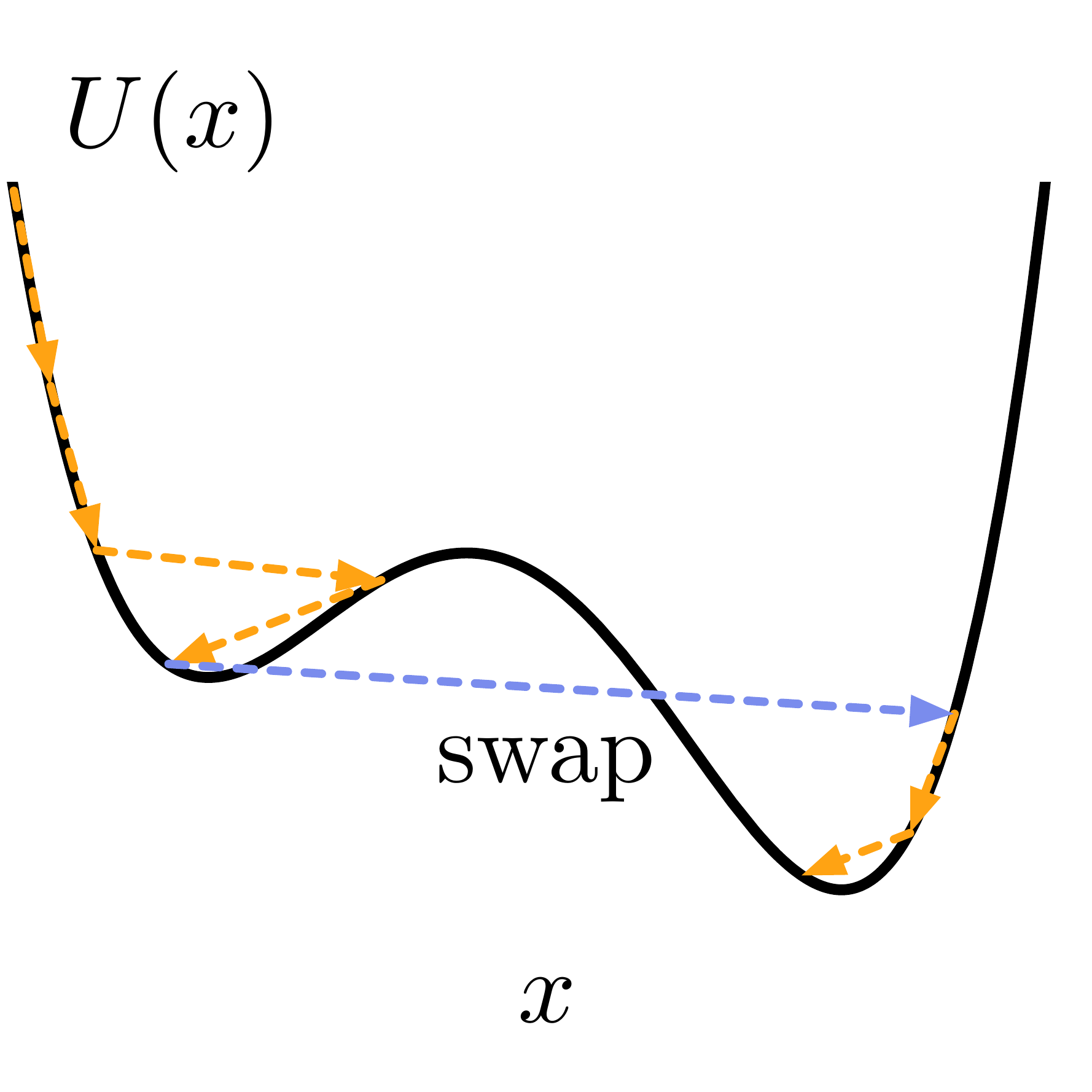}
&
\includegraphics[height=0.2\textwidth]{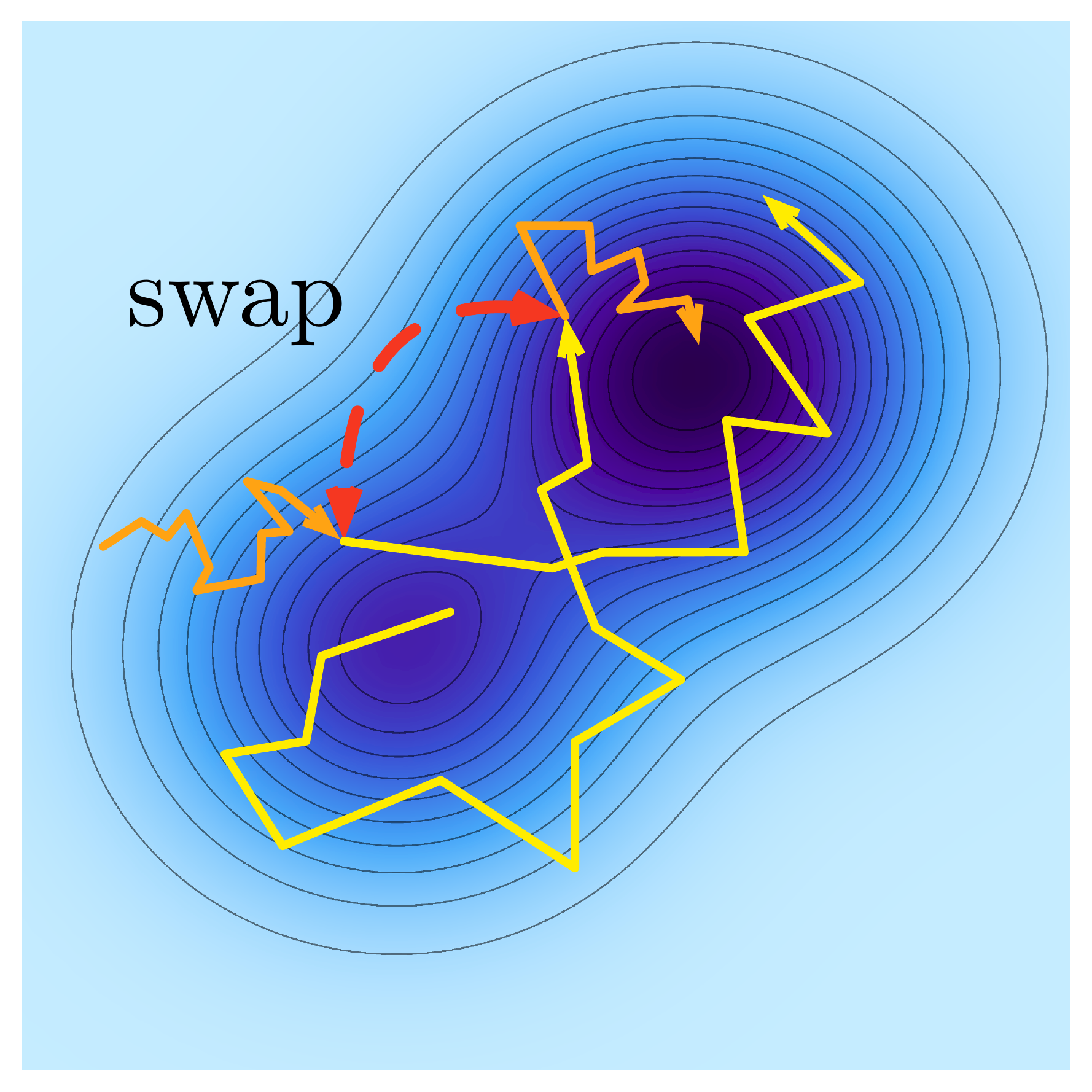}
&
\includegraphics[height=0.2\textwidth]{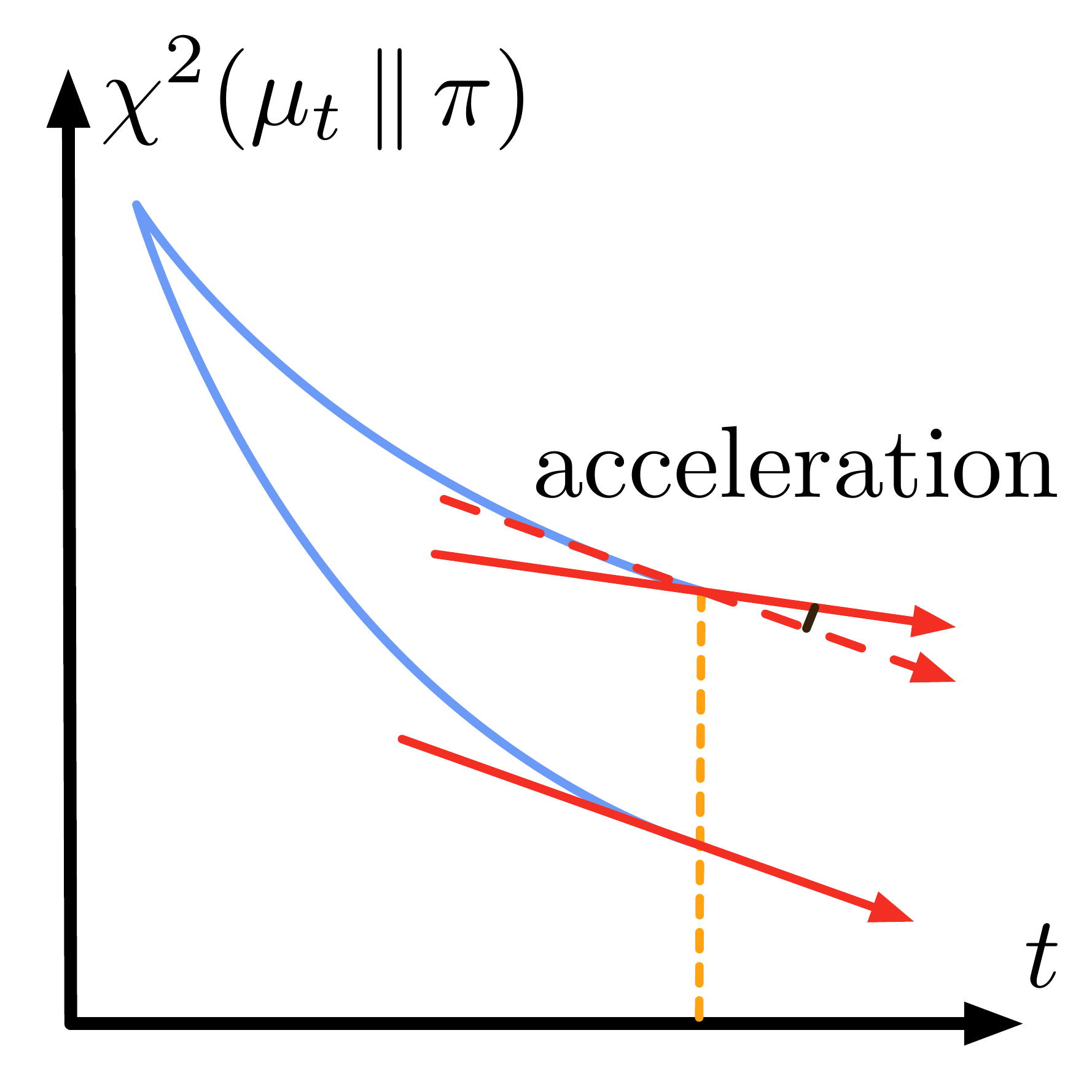}
&
\includegraphics[height=0.2\textwidth]{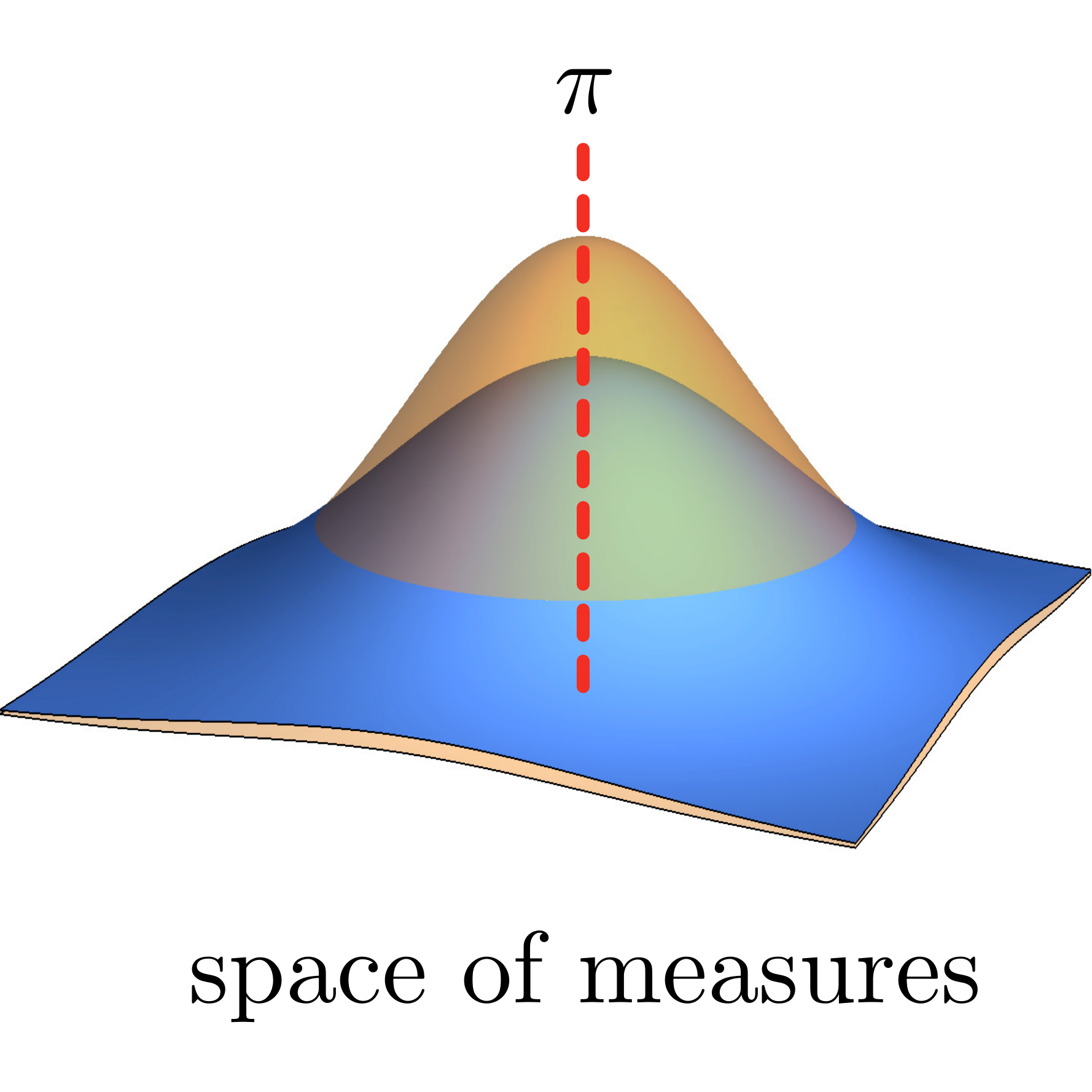}\\
(a) & (b) &(c) &(d) \\
\end{tabular}
\caption{\small An illustration of the replica exchange Langevin diffusion. In (a) we illustrate the diffusion process  driven by the low temperature, which locally exploits the geometry. The orange dashed lines characterize the evolution of the standard Langevin diffusion, while the blue dashed line denotes the swap, which drives the diffusion process into the neighborhood of another local minimum. In (b) we illustrate the trajectories of a pair of diffusion processes driven by two temperatures. The orange and yellow lines correspond to the low and high temperatures, respectively. The black lines are the contours of the objective function. Note that the lines of the same colors are disjoint due to the swap, which is denoted by the red dashed line. In (c) we illustrate the evolution of the $\chi^2$-divergence. The upper and lower blue curves correspond to diffusion processes with zero and positive swapping intensities, respectively. The two solid red arrows denote the derivatives of the $\chi^2$-divergence, and the angle between them characterizes the acceleration effect. In (d) we illustrate the concentration of the empirical measures. The horizontal plane denotes the space of measures, whose center is $\pi$, that is, the stationary distribution of the replica exchange Langevin diffusion. The yellow and blue surfaces that center at $\pi$ characterize the probability densities of the empirical measures, corresponding to zero and positive swapping intensities, respectively. Compared with the blue surface, the yellow one is more concentrated around $\pi$, which also characterizes the acceleration effect of swapping. }
\label{fg}
\vspace{-12pt}
\end{figure*}

\vskip4pt
{\noindent\bf Related Work:}  
The idea of nonconvex optimization via diffusion process dates back to simulated annealing \citep{vcerny1985thermodynamical} and is systematically studied in \cite{chiang1987diffusion}. For the application of the Langevin diffusion in nonconvex optimization, see, for example, \cite{gidas1985global} and \cite{geman1986diffusions} and the followup works. More recently, \cite{dalalyan2009sparse, bubeck2015sampling, dalalyan2017theoretical} analyze the nonasymptotic rate of convergence of the Langevin diffusion for strongly convex objective functions. Moreover, \cite{durmus2017nonasymptotic,zhang2017hitting,raginsky2017non} further extend the analysis to handle nonconvexity and stochastic gradient. See also, for example, \cite{belloni2015escaping, ge2015escaping, hazan2016graduated, cheng2017underdamped} for related works on the extensions of the Langevin diffusion and other diffusions. However, all these works focus on the setting with a single temperature. In comparison, we study the acceleration effect of replica exchange, which utilizes more than one temperature. Hence, our work is orthogonal to these existing works and can potentially be applied to accelerate other diffusion processes. 

In a parallel line of works, the idea of replica exchange is studied in the context of parallel tempering Markov chain Monte Carlo for sampling from a multimodal distribution. See, for example, \cite{earl2005parallel,sindhikara2008exchange} and the references therein. More recently, \cite{woodard2009conditions} study the mixing time of parallel tempering, while \cite{dupuis2012infinite} extend the swapping rate of replica exchange to infinity and establish the corresponding LDP. In addition, another well-studied method is simulated tempering, a technique similar to replica exchange, which also aims at accelerating the convergence of diffusion processes \citep{marinari1992simulated, geyer1995annealing}. The main difference between replica exchange and simulated tempering is that the former tracks multiple diffusion processes driven by various temperatures and accelerates the convergence by swapping, while the latter tracks only one diffusion process but treats the temperature as a stochastic process. See, for example, \cite{zheng2003swapping} and the references therein for a detailed discussion on the difference. More recently, \cite{ge2017beyond} study the convergence of simulating tempering for nonconvex objective functions from the perspective of mixing time, which is also orthogonal to our work. The LDP techniques in our work may potentially be applied to the analysis of simulated tempering as well.

\vskip4pt
{\noindent\bf Main Contribution:} Despite the broad application in sampling, to the best of our knowledge, this paper is the first attempt to apply the replica exchange Langevin diffusion in the context of nonconvex optimization. In summary, our contribution is twofold: (i) We quantify the acceleration effect of the replica exchange Langevin diffusion from two perspectives, that is, the convergence of the $\chi^2$-divergence and the LDP for empirical measures. (ii) We propose a nonconvex optimization algorithm based on the discretization of the replica exchange Langevin diffusion and establish an upper bound for the discretization error.

\section{Basic Idea} \label{BI}
We consider the following unconstrained optimization problem,
\#  \label{optimization}
\minimize_{x \in \RR^d}\,U(x).
\# 
When $U(x)$ is nonconvex, it is difficult to obtain its global minima. A commonly used algorithm is the Langevin diffusion, which is defined by the stochastic differential equation
\# \label{ld}
\ud X_t=-\nabla U(X_t)\ud t+\sqrt{2\tau} \ud W_t.
\#  
Here $\{W_t\}_{t\geq 0}$ is a standard $d$-dimensional Brownian motion and $\tau>0$ is the temperature parameter. Under mild regularity conditions, the Langevin diffusion $\{X_t\}_{t \geq 0}$ has a unique invariant distribution that is absolutely continuous with respect to the Lebesgue measure with density
\#\label{eq:w001}
\pi_{\tau}(x)=\frac{e^{-U(x)/\tau}}{\int_{\RR^d}e^{-U(x)/\tau} \ud x}.
\#
Here $\int_{\RR^d}e^{-U(x)/\tau} \ud x$ is the normalization constant. In particular,  $\pi_{\tau}$ is the limiting distribution of  $\{X_t\}_{t\geq 0}$, which means that with any initialization $X_0$, the distribution of $X_t$ converges to $\pi_{\tau}$  \citep{bakry2013analysis}. We observe from \eqref{eq:w001} that with $\tau \rightarrow 0$ the probability measure $\pi_{\tau}$ concentrates around the global minima of $U(x)$ \citep{hwang1980laplace}. In other words, if we choose a sufficiently small temperature parameter $\tau$ and run the Langevin diffusion in \eqref{ld} for a sufficiently long time, we expect to obtain a solution that falls into the neighborhood of a global minimum of $U(x)$ with high probability.

Here the temperature parameter $\tau$ plays a crucial role. In practice, we can only run the Langevin diffusion for a finite time, giving rise to a tradeoff between ``global exploration'' and ``local exploitation'', which correspond to high and low temperatures. In specific, when the temperature parameter $\tau$ is small, the convergence of $X_t$ is slow and the particle can be trapped by a local minimum for a long time without globally exploring the whole domain. Consequently, within a finite time, we can only obtain a sample from a distribution that is far away from the invariant distribution $\pi_{\tau}$. In contrast, when the temperature parameter $\tau$ is large, the convergence is accelerated due to better global exploration. The distribution of $X_t$ converges faster to the invariant distribution $\pi_{\tau}$ globally, but locally $\pi_{\tau}$ is less concentrated around the global minima of $U(x)$. 

To bridge the gap between high and low temperatures, in this paper we study an adaptive algorithm called replica exchange Langevin diffusion. In detail, we consider a pair of particles driven by two Langevin diffusions as defined in \eqref{ld} with temperatures $\tau_1 < \tau_2$, respectively. We use $Z_t = (Z^{(1)}_t,Z^{(2)}_t)$ to denote the positions of the two particles at time $t$. In other words, we have 
\#\label{eq:10}
\ud Z^{(1)}_t=-\nabla U\bigl(Z^{(1)}_t\bigr)\ud t+\sqrt{2\tau_1}\ud W^{(1)}_t,\quad
\ud Z^{(2)}_t=-\nabla U\bigl(Z^{(2)}_t\bigr)\ud t+\sqrt{2\tau_2}\ud W^{(2)}_t. 
\#
According to \eqref{eq:w001}, the invariant distribution of $\{Z_t\}_{t\geq 0}$ is absolutely continuous with respect to the Lebesgue measure on $\RR^{2d}$ and its density is proportional to 
\#\label{eq:mu}
\mu(x_1,x_2) = \exp\bigl( -U(x_1)/\tau_1-U(x_2)/\tau_2\bigr). 
\#
The marginal invariant distribution of the low-temperature particle concentrates around the global minima of $U(x)$. Hence, the low-temperature particle is of particular interest for the purpose of nonconvex optimization. The key idea of replica exchange is to enable the low-temperature particle to achieve better global exploration by swapping its position with the high-temperature particle. In specific, at time $t$ and positions $ (x_1,x_2) $, the two particles swap with rate  
\#\label{eq:11}
a\cdot s(x_1,x_2)=a\cdot \biggl(1 \land \frac{\mu(x_2,x_1)}{\mu(x_1,x_2)}\biggr),
\#
 which means 
\#\label{eq:w250}
\PP\bigl(Z_{t+\ud t}=(x_2,x_1) \,|\, Z_t=(x_1,x_2) \bigr)&=a\cdot s(x_1,x_2)\ud t, \notag\\
\PP\bigl(Z_{t+\ud t}=(x_1,x_2)\,|\, Z_t=(x_1,x_2) \bigr)&=1-a \cdot s(x_1,x_2)\ud t.
\# 
Here $a\geq 0$ is a constant called swapping intensity. As is shown in Lemma \ref{thm:1}, the specific form of $s(x_1,x_2) $ in \eqref{eq:11} ensures that for any $a$, the invariant distribution of the replica exchange Langevin diffusion $\{Z_t\}_{t\geq 0}$ is the same as \eqref{eq:mu}, that is, as if the two particles are independent. Corresponding to the continuous-time process in \eqref{eq:10}-\eqref{eq:11}, in \S\ref{sec:disc} of the appendix we consider the replica exchange stochastic gradient descent algorithm, which corresponds to the discretization of $\{Z_t\}_{t\geq 0}$ using Euler scheme.

To better understand the intuition behind replica exchange, note that we use two particles driven by low and  high temperatures to achieve ``local exploitation'' and ``global exploration'', respectively. For the purpose of optimization, we only need to track the trajectory of the first particle. By plugging \eqref{eq:mu} in \eqref{eq:11}, we obtain
\# \label{ooo}
a\cdot s(x_1,x_2)=a\cdot  \exp \Bigl( 0 \land (1/\tau_1-1/\tau_2)\cdot\bigl(U(x_1)-U(x_2)\bigr) \Bigr)  .
\# 
Since $\tau_1<\tau_2$, the rate $a\cdot s(x_1,x_2)$ is monotone increasing with respect to the difference between the objective values at $x_1$ and $x_2$, which denote the positions of the two particles. Hence, the two particles are more likely to swap when the first one has a larger objective value. In other words, swapping tends to move the first particle, which we are interested in, to a position corresponding to a lower objective value. An extreme case of the replica exchange Langevin diffusion is $\tau_1=0$ and $\tau_2=\infty$. In this case, the first equation in \eqref{eq:10} reduces to an ordinary differential equation charactering the deterministic gradient descent, while the second one in \eqref{eq:10} corresponds to an approximated uniform exploration of the whole domain $\RR^d$. According to \eqref{ooo}, the particles never swap if the objective value of the first particle is smaller than the second. Hence, roughly speaking, the replica exchange Langevin diffusion reduces to the deterministic gradient descent with uniformly randomized restarts. In contrast to this extreme case, for $0 < \tau_1 < \tau_2 <\infty$, the second particle globally explores the whole domain more adaptively with not only noise but also gradient information. In particular, its stationary distribution has a larger density around the local minima, which leads to better restarts that adapt to the global geometry for the first particle. 


\section{Theoretical Analysis} \label{TA}
{ In this section, we lay out the theoretical analysis of replica exchange Langevin diffusion introduced in \S\ref{BI}, which demonstrates the acceleration effect of swapping.} Due to space constraint, we defer the necessary background and detailed proofs to \S\ref{sec:back} and \S\ref{detp} of the appendix, respectively. Throughout the following analysis, we assume that the objective function $U(\cdot)$ satisfies the following assumption.
\begin{assumption}[Smoothness and Dissipativity]\label{ass1}
The function $U(\cdot)$ is $L$-smooth, that is, there exists a positive constant $L$ such that for all $x,y\in \RR^d$
\#\label{ass1}
\bigl\|\nabla U(x)-\nabla U(y)\bigr \| \le L \|x-y\|^2.
\#
The function $U(\cdot)$ is also $(\alpha,\beta)$-dissipative, that is, there exist positive constants $\alpha$ and $\beta$ such that for all $x\in \RR^d$ 
\#\label{ass2}
\bigl \la x, \nabla U(x) \bigr \ra \ge \alpha \|x\|^2-\beta.
\#
\end{assumption}
The assumption on smoothness in \eqref{ass1} characterizes the Lipschitz continuity of the objective function, which is commonly used in the optimization literature \citep{nesterov2013introductory}. The assumption on dissipativity in \eqref{ass2}, roughly speaking, characterizes the approximate quadratic growth of the objective function at infinity, which is commonly used in the control and dynamic system literature \citep{hale2010asymptotic}. This condition is also used in \cite{raginsky2017non}. It is worth noting that our theory does not require the convexity assumption.

\subsection{Invariance and Reversibility}
{  In this section, we show that the invariant distribution of the replica exchange Langevin diffusion $\{Z_t\}_{t\ge 0} $ has the form in \eqref{eq:mu} rigorously, whose first component preserves the concentration. We also show the reversibility of $\{Z_t\}_{t\ge 0} $, which is a nice property necessary for the subsequent convergence analysis.} 
We consider the replica exchange Langevin diffusion $\{Z_t\}_{t\ge 0} $ defined by \eqref{eq:10}-\eqref{eq:11}, where $Z_t=(Z^{(1)}_t,Z^{(2)}_t) $ denotes the positions of the two particles at time $t$, and $a \geq 0$ is the swapping intensity that controls the frequency of swapping. { The following auxiliary lemma is a standard result. It characterizes the invariant distribution of $\{Z_t\}_{t\ge 0}$ and its reversibility. Given its crucial role in the subsequent convergence analysis, we provide a detailed proof in \S\ref{B1} of the appendix. } 
\begin{lemma} \label{thm:1}
The infinitesimal generator of $\{Z_t\}_{t\ge 0}$ with swapping intensity $a$, which is defined in \eqref{eq:10}-\eqref{eq:11}, takes the form
\# \label{generator0}
&\mathscr{L}^a \bigl(f(x_1,x_2)\bigr)=\underbrace{-\bigl\la \nabla_{x_1}f(x_1,x_2),\nabla_{x_1} U(x_1)   \bigr\ra+ \tau_1\Delta_{x_1}f(x_1,x_2)}_{\displaystyle\mathscr{L}^a_1\bigl(f(x_1,x_2)\bigr)}  \\
&\qquad\underbrace{-\bigl\la \nabla_{x_2}f(x_1,x_2),\nabla_{x_2} U(x_2)   \bigr\ra+ \tau_2\Delta_{x_2}f(x_1,x_2)}_{\displaystyle\mathscr{L}^a_2\bigl(f(x_1,x_2)\bigr)} +\underbrace{a \cdot s(x_1,x_2) \cdot \bigl( f(x_2,x_1)-f(x_1,x_2) \bigr)}_{\displaystyle\mathscr{L}^a_s\bigl(f(x_1,x_2)\bigr)}. \notag
\#
Moreover, $\{Z_t\}_{t\ge 0}$ is reversible and its invariant distribution $\pi$  has density
\#  \label{inm}
\ud \pi(x_1,x_2)\propto \mu(x_1,x_2)\ud x_1\ud x_2
\# 
with respect to the Lebesgue measure on $ \RR^{2d}$, where $\mu(x_1,x_2)$ is defined in \eqref{eq:mu}.
\end{lemma}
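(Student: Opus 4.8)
The plan is to establish the three assertions — the form of the generator $\mathscr{L}^a$, reversibility, and the identification of the invariant density — by treating the diffusive part and the swapping part of $\{Z_t\}_{t\ge0}$ separately and then combining them. For the generator, I would first note that between swaps the two coordinates evolve as the independent Langevin diffusions in \eqref{eq:10}, so by It\^o's formula the diffusive part of the generator acting on a smooth test function $f$ is $-\la\nabla_{x_1}f,\nabla U(x_1)\ra+\tau_1\Delta_{x_1}f-\la\nabla_{x_2}f,\nabla U(x_2)\ra+\tau_2\Delta_{x_2}f$, that is, $\mathscr{L}^a_1+\mathscr{L}^a_2$. For the swap, the transition rule \eqref{eq:w250} describes a pure-jump mechanism that, from $(x_1,x_2)$, jumps to $(x_2,x_1)$ with rate $a\cdot s(x_1,x_2)$; since $s(x_1,x_2)\le 1$ this rate is bounded by $a$, so the jump process is non-explosive and its generator is the standard one $\mathscr{L}^a_s f(x_1,x_2)=a\cdot s(x_1,x_2)\bigl(f(x_2,x_1)-f(x_1,x_2)\bigr)$, read off from \eqref{eq:w250} as $\lim_{\ud t\to 0}\bigl(\E[f(Z_{t+\ud t})\given Z_t=(x_1,x_2)]-f(x_1,x_2)\bigr)/\ud t$. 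Adding the diffusive and the jump parts — legitimate because the jump rate is bounded — yields \eqref{generator0}.

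For reversibility it suffices to show that $\mathscr{L}^a$ is symmetric in $L^2(\pi)$, that is, $\int g\,\mathscr{L}^a f\,\ud\pi=\int f\,\mathscr{L}^a g\,\ud\pi$ for all $f,g$ in a suitable class of smooth functions with enough decay that the integration-by-parts boundary terms vanish (the $(\alpha,\beta)$-dissipativity \eqref{ass2} forces Gaussian-type decay of $\mu$, which makes this harmless). I would verify the identity term by term. For $\mathscr{L}^a_1$ I would use the identity $\mathscr{L}^a_1 f=\tau_1 e^{U(x_1)/\tau_1}\,\diver_{x_1}\bigl(e^{-U(x_1)/\tau_1}\nabla_{x_1}f\bigr)$ and integrate by parts in $x_1$, which turns $\int g\,\mathscr{L}^a_1 f\,\ud\pi$ into $-\tau_1\int\la\nabla_{x_1}f,\nabla_{x_1}g\ra\,\ud\pi$, manifestly symmetric; the same argument handles $\mathscr{L}^a_2$. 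For the swap term, writing $\bar z=(x_2,x_1)$ and noting that the coordinate swap $z\mapsto\bar z$ has unit Jacobian, $\int g(z)\,\mathscr{L}^a_s f(z)\,\mu(z)\,\ud z$ equals $a\int g(z)\,s(z)\,f(\bar z)\,\mu(z)\,\ud z-a\int g(z)\,s(z)\,f(z)\,\mu(z)\,\ud z$; the second integral is already symmetric in $f,g$, and in the first I substitute $z\mapsto\bar z$ to obtain $a\int f(z)\,g(\bar z)\,s(\bar z)\,\mu(\bar z)\,\ud z$. The full expression is symmetric precisely because $s(x_1,x_2)\,\mu(x_1,x_2)=\mu(x_1,x_2)\land\mu(x_2,x_1)$ is invariant under exchanging $x_1\leftrightarrow x_2$ — this is the detailed-balance identity built into the Metropolis-type choice of $s$ in \eqref{eq:11}, and it is the single place where the specific form of $s$ is used.

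Invariance of $\pi$ is then immediate: taking $g\equiv 1$ above, so that $\mathscr{L}^a g=0$, gives $\int\mathscr{L}^a f\,\ud\pi=\int f\,\mathscr{L}^a 1\,\ud\pi=0$ for every test $f$, which is the stationarity of $\pi$; uniqueness of the invariant distribution follows from the irreducibility of the underlying diffusion. I expect the one genuinely delicate point to be the functional-analytic bookkeeping — exhibiting a core of smooth test functions on which $\mathscr{L}^a$ has the stated form and on which the integration by parts is rigorous — rather than the computations themselves, which are short; the dissipativity assumption is exactly what renders this bookkeeping routine.
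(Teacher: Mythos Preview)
Your proposal is correct and follows essentially the same approach as the paper: decompose $\mathscr{L}^a$ into the diffusion and swap pieces, verify symmetry of each piece in $L^2(\pi)$ via integration by parts for $\mathscr{L}^a_1,\mathscr{L}^a_2$ and via the detailed-balance identity $s(x_1,x_2)\mu(x_1,x_2)=\mu(x_1,x_2)\land\mu(x_2,x_1)$ for $\mathscr{L}^a_s$, and then read off invariance from reversibility. The only cosmetic difference is that the paper handles the swap symmetry by explicitly splitting the domain into $\{\mu^-\le\mu^+\}$ and $\{\mu^->\mu^+\}$ rather than invoking the min-form of $s\mu$ directly, but this is the same computation unpacked.
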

In Lemma \ref{thm:1}, the first two terms $\mathscr{L}^a_1$ and $\mathscr{L}^a_2$ on the right-hand side of \eqref{generator0} correspond to the standard Langevin diffusion, while the last term $\mathscr{L}^a_s$ arises from swapping. The replica exchange Langevin diffusion $\{Z_t\}_{t\ge 0} $ defined in \eqref{eq:10}-\eqref{eq:11} is ergodic, which means that, with any initialization, the distribution of $Z_t$ converges to the invariant distribution $\pi$. There are two perspectives to characterize the convergence of the Markov process $\{Z_t\}_{t\ge 0} $: (i) the $\chi^2$-divergence between the distribution of $Z_t $ and the invariant distribution $\pi$, and (ii) the convergence of the empirical measure of $\{Z_t\}_{t\ge 0} $, which is viewed as a random element in the space of measures. In the following, we quantify the convergence of $\{Z_t\}_{t\ge 0} $ from both perspectives. In \S\ref{sec:chisquare}, we use Poincar\'e inequality to quantify (i), while in \S\ref{sec:LDP}, we apply the large deviation principle (LDP) to characterize (ii). In particular, we show that the term $\mathscr{L}^a_s$ in \eqref{generator0} plays a crucial role in accelerating the convergence.

\subsection{Convergence in $\chi^2$-Divergence}\label{sec:chisquare}

Let $\mu_t$ be the distribution of the replica exchange Langevin diffusion $\{Z_t\}_{t \ge 0}$ at time $t$, and $\pi$ be its invariant distribution. In the following, we quantify the discrepancy between $\mu_t$ and $\pi$ using the $\chi^2$-divergence, which is defined as 
\$
\chi^2(\mu_t\,\|\,\pi)=\int \biggl (\frac{\ud \mu_t}{\ud \pi}-1 \biggr )^2\ud \pi,
\$ 
where $ \ud \mu_t/\ud \pi $ is the Radon-Nikodym derivative between $\mu_t$ and $\pi$. In the following, we characterize the evolution of $\chi^2(\mu_t\,\|\,\pi)$ along time $t$ using the Dirichlet form, which is defined as 
\#  \label{dirichlet}
\mathscr{E}^a(f)=\int\Gamma^a(f) \ud\pi,\quad\text{where}~~ \Gamma^a(f) = 1/2\cdot \bigl(\mathscr{L}^a(f^2)-2f\mathscr{L}^a(f)\bigr).
\#
Here $\Gamma^a$ is called Carr\'e du Champ operator and recall that we use $\pi$ and $\mathscr{L}^a $ to denote the invariant distribution and infinitesimal generator of $\{Z_t\}_{t\ge0}$, respectively. In other words, the Dirichlet form is defined as the integration of the Carr\'e du Champ operator under the invariant distribution $\pi$. 

To characterize the evolution of the $\chi^2$-divergence, we take the derivative of $\chi^2(\mu_t\,\|\,\pi)$ with respect to time $t$. Recall that $\mu_t$ is the distribution of $Z_t$, and by \eqref{eq:w00} in the appendix the corresponding semigroup $\{P_t\}_{t\ge 0}$ is defined as $ P_t(f(x))=\EE[f(Z_t)\,|\,Z_0=x]$. By setting $f$ as $\ud \mu_0/\ud \pi$ and the definition of conditional expectation, we have ${\ud \mu_t}/{\ud \pi}=P_t({\ud \mu_0}/{\ud \pi})$, which implies
\#\label{eq:w100}
\frac{\ud}{\ud t}\chi^2(\mu_t\,\|\,\pi)=\frac{\ud}{\ud t}  \int \biggl[P_t\biggl(\frac{\ud \mu_0}{\ud \pi}\biggr)\biggr]^2 \ud \pi&=2\int P_t\biggl(\frac{\ud \mu_0}{\ud \pi}\biggr)\cdot \frac{\ud}{\ud t} \biggl[P_t\biggl(\frac{\ud \mu_0}{\ud \pi}\biggr)\biggr] \ud \pi \notag\\
&=2 \int P_t\biggl(\frac{\ud \mu_0}{\ud \pi}\biggr)\cdot \mathscr{L}^a \biggl(P_t\biggl(\frac{\ud \mu_0}{\ud \pi}\biggr)\biggr) \ud \pi,
\#
where the last equation follows from the definition of the infinitesimal generator in Definition \ref{def:generator} in the appendix. Meanwhile, by Definition \ref{def:2} in the appendix and the invariance of $\pi$, we have $\int \mathscr{L}^a(f^2) \ud\pi = 0$, which together with \eqref{dirichlet} implies 
\#\label{eq:w101}
\mathscr{E}^a(f) = -\int f\mathscr{L}^a (f)\ud\pi.
\#
 Combining \eqref{eq:w100} and \eqref{eq:w101} we obtain 
\#  \label{eq4.1}
\frac{\ud}{\ud t}\chi^2(\mu_t\,\|\,\pi)=-2\mathscr{E}^a\biggl(P_t\biggl(\frac{\ud \mu_0}{\ud \pi}\biggr)\biggr)=-2\mathscr{E}^a\biggl(\frac{\ud \mu_t}{\ud \pi}\biggr),
\# 
which shows that the derivative of the $\chi^2$-divergence between $\mu_t$ and the invariant distribution $\pi$ is exactly negative twice the Dirichlet form of the Radon-Nykodim derivative $\ud \mu_t/\ud \pi$. In the following theorem, we show that a larger swapping intensity $a$ boosts the Dirichlet form and thus accelerates the convergence. Recall that, as defined in \eqref{dirichlet}, $\mathscr{E}^a$ is the Dirichlet form of the replica exchange Langevin diffusion $\{Z_t\}_{t\geq 0}$ with swapping intensity $a$.

\begin{theorem}\label{prop:2}
For any fixed function $f$,   $\mathscr{E}^a(f)$ is an increasing nonnegative function with respect to $a$. In particular, we have 
\#\label{eq:w200}
\mathscr{E}^a(f) &= \underbrace{\int \Bigl(\tau_1 \cdot \bigl\| \nabla_{x_1} f(x_1,x_2) \bigr\|^2+\tau_2 \cdot \bigl\| \nabla_{x_2} f(x_1,x_2) \bigr\|^2\Bigr) \ud \pi(x_1, x_2)}_{\displaystyle\mathscr{E}^0(f)} \notag\\
&\qquad + \underbrace{\int a/2\cdot s(x_1,x_2) \cdot\big(f(x_2,x_1)-f(x_1,x_2)\big)^2 \ud \pi(x_1, x_2)}_{\dr Acceleration~Effect},
\#
where the both terms on the right-hand side are nonnegative. 
\end{theorem}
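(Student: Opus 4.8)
The plan is to compute the Carr\'e du Champ operator $\Gamma^a$ of the generator $\mathscr{L}^a$ in closed form and then integrate against $\pi$. The starting point is the decomposition $\mathscr{L}^a = \mathscr{L}^a_1 + \mathscr{L}^a_2 + \mathscr{L}^a_s$ supplied by Lemma \ref{thm:1}. Since $\mathscr{L}^a$ is linear and $\Gamma^a(f) = \tfrac12\bigl(\mathscr{L}^a(f^2) - 2f\mathscr{L}^a(f)\bigr)$, the map $\mathscr{L}^a \mapsto \Gamma^a(f)$ is additive, so $\Gamma^a(f) = \Gamma^a_1(f) + \Gamma^a_2(f) + \Gamma^a_s(f)$, where $\Gamma^a_i$ denotes the Carr\'e du Champ of the corresponding summand. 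Integrating, $\mathscr{E}^a(f) = \int \Gamma^a_1(f)\,\ud\pi + \int\Gamma^a_2(f)\,\ud\pi + \int\Gamma^a_s(f)\,\ud\pi$, and it remains to evaluate these three pieces.

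For the two diffusion pieces I would invoke the standard identity for Langevin-type generators: if $\mathscr{L}(g) = -\langle\nabla g,\nabla U\rangle + \tau\Delta g$, then the product rule $\nabla(g^2) = 2g\nabla g$ together with $\Delta(g^2) = 2g\Delta g + 2\|\nabla g\|^2$ gives $\mathscr{L}(g^2) - 2g\mathscr{L}(g) = 2\tau\|\nabla g\|^2$, hence $\Gamma(g) = \tau\|\nabla g\|^2$. Applying this to $\mathscr{L}^a_1$ in the variable $x_1$ and to $\mathscr{L}^a_2$ in the variable $x_2$ yields $\Gamma^a_1(f) = \tau_1\|\nabla_{x_1}f\|^2$ and $\Gamma^a_2(f) = \tau_2\|\nabla_{x_2}f\|^2$; integrating these against $\pi$ reproduces exactly the term labeled $\mathscr{E}^0(f)$ in \eqref{eq:w200}, consistent with setting $a=0$.

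For the swapping piece, write $\sigma$ for the coordinate-swap $(\sigma f)(x_1,x_2) = f(x_2,x_1)$, so that $\mathscr{L}^a_s(f) = a\,s(x_1,x_2)\,(\sigma f - f)$ and $\sigma(f^2) = (\sigma f)^2$. Then
\[
\mathscr{L}^a_s(f^2) - 2f\,\mathscr{L}^a_s(f) = a\,s\bigl((\sigma f)^2 - f^2\bigr) - 2f\,a\,s\,(\sigma f - f) = a\,s\,(\sigma f - f)^2,
\]
so that $\Gamma^a_s(f) = \tfrac{a}{2}\,s(x_1,x_2)\bigl(f(x_2,x_1)-f(x_1,x_2)\bigr)^2$, and integrating against $\pi$ gives precisely the acceleration term in \eqref{eq:w200}. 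Nonnegativity of the two terms is then immediate: $\tau_1,\tau_2>0$, $a\ge0$, $s(x_1,x_2)\in(0,1]$ by \eqref{eq:11}, and the remaining factors are squares. Monotonicity in $a$ follows from the explicit formula, which is affine in $a$ with slope $\int\tfrac12\, s(x_1,x_2)\bigl(f(x_2,x_1)-f(x_1,x_2)\bigr)^2\,\ud\pi\ge0$.

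I do not anticipate a genuine obstacle: the theorem is an exact algebraic computation once Lemma \ref{thm:1} identifies the generator and invariant measure. The only points needing care are routine. First, the product and chain rules for the Laplacian presuppose that $f$ lies in a core/domain of $\mathscr{L}^a$ on which these manipulations and the ensuing integrals are legitimate, so I would state the result for $f$ in that class. Second, one must check that the pure-jump operator $\mathscr{L}^a_s$ genuinely produces a perfect square, i.e.\ that the cross term $-2f\,a\,s\,\sigma f$ combines with $a\,s\,(\sigma f)^2$ and $a\,s\,f^2$ without generating extra terms; this holds because $\sigma$ acts pointwise and is an involution, so no derivatives of $s$ or of $f$ ever appear.
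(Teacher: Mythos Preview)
Your proposal is correct and matches the paper's own proof essentially line for line: the paper computes $\Gamma^a(f)$ directly from the generator in \eqref{generator0} using $\nabla_{x_i}f^2=2f\nabla_{x_i}f$ and $\Delta_{x_i}f^2=2f\Delta_{x_i}f+2\|\nabla_{x_i}f\|^2$, obtains exactly the expression $\tau_1\|\nabla_{x_1}f\|^2+\tau_2\|\nabla_{x_2}f\|^2+\tfrac{a}{2}s(x_1,x_2)(f(x_2,x_1)-f(x_1,x_2))^2$, and integrates against $\pi$. Your additive decomposition of $\Gamma^a$ along $\mathscr{L}^a_1+\mathscr{L}^a_2+\mathscr{L}^a_s$ is just a slightly more structured bookkeeping of the same computation.
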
 
See \S\ref{B1.5}  of the appendix for a detailed proof of Theorem \ref{prop:2}. 
Combined with \eqref{eq4.1}, Theorem \ref{prop:2} shows that swapping accelerates the evolution of $\chi^2(\mu_t\,\|\,\pi)$. In particular, the Dirichlet form $\mathscr{E}^a$ on the right-hand side of \eqref{eq4.1} decomposes into two terms. The first term $\mathscr{E}^0$ in \eqref{eq:w200} corresponds to the replica exchange Langevin diffusion without swapping, that is, $a = 0$. More specifically, the two nonnegative terms in $\mathscr{E}^0$ characterize the individual dynamics of the two particles, respectively. In particular, larger temperatures $\tau_1$ and $\tau_2$ yield a larger $\mathscr{E}^0$, which implies a faster rate of convergence. Meanwhile, as shown in the proof of Theorem \ref{prop:2}, the specific form of swapping in \eqref{eq:w250} ensures the nonnegativity of the second term in \eqref{eq:w200}, which further boosts the Dirichlet form and leads to faster evolution of $\chi^2(\mu_t\,\|\,\pi)$ in \eqref{eq4.1}. It is worth mentioning that Theorem \ref{prop:2} does not rely on Assumption \ref{prop:2} and holds for all objective functions $U(\cdot)$. 

To better understand Theorem \ref{prop:2}, note that the symmetry of $f$, which corresponds to $\ud \mu_t/\ud \pi(x_1, x_2)$ in \eqref{eq4.1}, plays a crucial role in the acceleration effect. In specific, when $\ud \mu_t/\ud \pi(x_1, x_2)$ is asymmetric, the second term of the Dirichlet form in \eqref{eq:w200} is positive. Otherwise when $\ud \mu_t/\ud \pi(x_1, x_2)$ is symmetric, the second term in \eqref{eq:w200} vanishes, since $\ud \mu_t/\ud \pi(x_1, x_2) - \ud \mu_t/\ud \pi(x_2, x_1) = 0$ for all $x_1$ and $x_2$. In other words, the acceleration effect degenerates due to symmetry. Intuitively, at time $t$ the two particles are equivalent and swapping does not change their joint distribution, and hence does not affect the convergence of $\chi^2(\mu_t\,\|\,\pi)$. 

Based on \eqref{eq4.1}, we further establish the exponential convergence of $\chi^2(\mu_t\,\|\,\pi)$. The key ingredient is the Poincar\'e inequality \citep{bakry2013analysis}. Furthermore, we show that such an exponential rate of convergence is dictated by the constant in the Poincar\'e inequality, which is called the Poincar\'e constant. In particular, we show how the swapping intensity $a$ affects the Poincar\'e constant and accelerates the exponential convergence. 

For a Markov process, the Poincar\'e inequality states that the $\chi^2$-divergence between any probability measure and its invariant distribution is uniformly upper bounded by the Dirichlet form of the Radon-Nykodim derivative, provided that the probability measure of interest is absolutely continuous with respect to the invariant distribution. In specific, we say that $\{Z_t\}_{t\ge0}$  satisfies the Poincar\'e inequality with Poincar\'e constant $\kappa$, if for all probability measures $\nu \ll \pi$, the following inequality holds,
\#\label{eq:w30}
\chi^2(\nu\,\|\,\pi) \le \kappa \cdot  \mathscr{E}^a \biggl({\frac{\ud\nu}{\ud\pi}} \biggr).
\#
Note that the Poincar\'e inequality is specified by the invariant distribution $\pi$ and the Dirichlet form $\mathscr{E}^a$. 

When the Poincar\'e inequality in \eqref{eq:w30} holds, the derivative of the $\chi^2$-divergence in \eqref{eq4.1} is upper bounded by itself, which implies the exponential rate of convergence. In particular, we have
\# \label{expdecay}
\frac{\ud}{\ud t}\chi^2(\mu_t\,\|\,\pi)\le -2\kappa^{-1}\cdot\chi^2(\mu_t\,\|\,\pi), ~~ \text{which~yields}~~\chi^2(\mu_t\,\|\,\pi) \le \chi^2(\mu_0\,\|\,\pi)\cdot  e^{-2t/\kappa}. 
\#

Establishing the Poincar\'e inequality is highly nontrivial. However, for the replica exchange Langevin diffusion, the following theorem shows that Poincar\'e inequality holds under the smoothness and dissipativity conditions in Assumption \ref{ass1}, which implies \eqref{expdecay}.
\begin{theorem}\label{theorem1}
    Under Assumption \ref{ass1}, the replica exchange Langevin diffusion specified in \eqref{eq:10}-\eqref{eq:11} satisfies the Poincar\'e inequality in \eqref{eq:w30}. Hence, the second inequality in \eqref{expdecay} holds, which implies the exponential decay of $\chi^2$-divergence.  
\end{theorem}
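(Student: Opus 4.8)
The plan is to reduce the statement to a Poincar\'e inequality for the non-swapping diffusion and then propagate it to every swapping intensity via Theorem~\ref{prop:2}. By Lemma~\ref{thm:1} the invariant distribution $\pi$ is the \emph{same} for all $a\ge 0$, so the left-hand side $\chi^2(\nu\,\|\,\pi)$ of \eqref{eq:w30} does not depend on $a$; and by Theorem~\ref{prop:2} we have $\mathscr{E}^a(f)\ge\mathscr{E}^0(f)$ for every $f$. Consequently, once we know $\chi^2(\nu\,\|\,\pi)\le\kappa\cdot\mathscr{E}^0(\ud\nu/\ud\pi)$ for all $\nu\ll\pi$, the same constant $\kappa$ satisfies \eqref{eq:w30} with $\mathscr{E}^a$ for every $a\ge 0$. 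Given \eqref{eq:w30}, the exponential decay in \eqref{expdecay} follows from \eqref{eq4.1} together with Gr\"onwall's inequality. Hence the entire problem is the $a=0$ case.

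For $a=0$ the two particles are independent, so $\pi=\pi_{\tau_1}\otimes\pi_{\tau_2}$ with $\pi_{\tau_i}\propto e^{-U(x)/\tau_i}$, and by \eqref{eq:w200} the Dirichlet form splits, $\mathscr{E}^0(f)=\tau_1\int\|\nabla_{x_1}f\|^2\,\ud\pi+\tau_2\int\|\nabla_{x_2}f\|^2\,\ud\pi$. The key step is a single-particle Poincar\'e inequality: for each $i$ there is $\kappa_i<\infty$ with $\mathrm{Var}_{\pi_{\tau_i}}(g)\le\kappa_i\cdot\tau_i\int\|\nabla g\|^2\,\ud\pi_{\tau_i}$ for all smooth $g$. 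I would derive this from Assumption~\ref{ass1} through a Lyapunov-function argument: the dissipativity condition \eqref{ass2} shows that $V(x)=\|x\|^2$ satisfies a drift inequality $\mathscr{A}_iV\le -c_iV+b_i$ for the single-particle Langevin generator $\mathscr{A}_i$, namely $\mathscr{A}_i g=-\la\nabla g,\nabla U\ra+\tau_i\Delta g$, while $L$-smoothness makes $U$ locally bounded, so $\pi_{\tau_i}$ has a density bounded above and below on every ball and hence admits a local Poincar\'e inequality there; combining the drift condition with the local inequality (Bakry--Cattiaux--Guillin criterion) yields $\kappa_i$, with an explicit dependence on $d,\alpha,\beta,L$ and $\tau_i$. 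Alternatively, one may simply cite \cite{raginsky2017non}, where a log-Sobolev inequality --- hence a Poincar\'e inequality --- is established for $\pi_{\tau_i}$ under exactly these hypotheses.

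It remains to tensorize. For a product measure, $\mathrm{Var}_{\pi_{\tau_1}\otimes\pi_{\tau_2}}(f)\le\int\mathrm{Var}_{\pi_{\tau_1}}\!\bigl(f(\cdot,x_2)\bigr)\,\ud\pi_{\tau_2}(x_2)+\int\mathrm{Var}_{\pi_{\tau_2}}\!\bigl(f(x_1,\cdot)\bigr)\,\ud\pi_{\tau_1}(x_1)$; applying the single-particle inequality to each term gives $\mathrm{Var}_\pi(f)\le\kappa_1\tau_1\int\|\nabla_{x_1}f\|^2\,\ud\pi+\kappa_2\tau_2\int\|\nabla_{x_2}f\|^2\,\ud\pi\le\max(\kappa_1,\kappa_2)\cdot\mathscr{E}^0(f)$. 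Taking $f=\ud\nu/\ud\pi$ and using $\mathrm{Var}_\pi(\ud\nu/\ud\pi)=\chi^2(\nu\,\|\,\pi)$ yields the $a=0$ Poincar\'e inequality with $\kappa=\max(\kappa_1,\kappa_2)$, and the reduction in the first paragraph finishes the proof. The main obstacle is the single-particle step: proving a Poincar\'e inequality for the non-log-concave Gibbs measure $\pi_{\tau_i}$ and tracking how $\kappa_i$ degrades as $\tau_i\to 0$; the comparison $\mathscr{E}^a\ge\mathscr{E}^0$, the tensorization, and the Gr\"onwall argument are all routine.
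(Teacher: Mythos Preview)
Your proposal is correct and shares the paper's overall reduction: both argue that since $\pi$ is independent of $a$ (Lemma~\ref{thm:1}) and $\mathscr{E}^a\ge\mathscr{E}^0$ (Theorem~\ref{prop:2}), it suffices to establish the Poincar\'e inequality for $a=0$, after which \eqref{expdecay} follows from \eqref{eq4.1} via Gr\"onwall. The genuine difference is in how the $a=0$ inequality is proved. The paper works directly on $\RR^{2d}$: it states a two-temperature ``Lyapunov implies Poincar\'e'' lemma adapted from \cite{bakry2008simple}, then verifies the drift condition \eqref{lyapnov} for the \emph{exponential} Lyapunov function $V(x_1,x_2)=\exp\bigl\{\alpha/4\cdot(\|x_1\|^2/\tau_1+\|x_2\|^2/\tau_2)\bigr\}$ applied to the joint generator $\mathscr{L}^0$. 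You instead exploit the product structure $\pi=\pi_{\tau_1}\otimes\pi_{\tau_2}$: you prove a one-dimensional Poincar\'e inequality for each $\pi_{\tau_i}$ (via a quadratic Lyapunov function, or by citing the log-Sobolev inequality in \cite{raginsky2017non}) and then tensorize. Your route is more modular and lets you read off $\kappa=\max(\kappa_1,\kappa_2)$, which makes the temperature dependence transparent; the paper's route is more self-contained but re-derives the Bakry--Cattiaux--Guillin machinery in the $2d$-dimensional setting and does not isolate the individual constants. One small caveat: the Bakry--Cattiaux--Guillin criterion requires $V\ge 1$, so your quadratic choice $V(x)=\|x\|^2$ should be shifted to $1+\|x\|^2$ (or replaced by an exponential as in the paper) before invoking it; this is cosmetic and does not affect the argument.
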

The proof of Theorem \ref{theorem1} adapts from the proof in \cite{bakry2008simple}, where a standard Langevin diffusion with unique driving temperature is considered. See \S\ref{B2} of the appendix for a detailed proof.

The second inequality in \eqref{expdecay} shows that the rate of convergence of $\chi^2(\mu_t\,\|\,\pi)$ is dictated by the Poincar\'e constant $\kappa$. In specific, a smaller Poincar\'e constant leads to a faster rate of convergence. Theorem \ref{prop:2} shows that swapping boosts the Dirichlet form and hence requires a smaller Poincar\'e constant $\kappa$, which by Theorem \ref{theorem1} leads to a faster exponential rate of convergence to the invariant distribution. 



\subsection{Large Deviation Principle (LDP) Analysis}\label{sec:LDP}

We focus on the empirical measure of the replica exchange Langevin diffusion $\{ Z_t \}_{t \ge 0}$ defined in \eqref{eq:10}-\eqref{eq:11}, which is in the same spirit of Polyak averaging \citep{polyak1992acceleration} but replaces the iterates with Dirac measures. In specific, for any fixed time $T>0$, the empirical measure of $\{Z_t\}_{t\ge 0}$ is defined as 
\# \label{empirical}
\nu_T=\frac{1}{T}\int_{0}^{T}\delta_{Z_t}\ud t,
\# 
where $\delta_{Z_t}$ denotes the Dirac measure at $Z_t$. Note that $\{\nu_T\}_{T\geq 0}$ is a sequence of random measures, which are random elements of $\cP(\RR^{2d})$, that is, the space of probability measures on $\RR^{2d}$. We equip $\cP(\RR^{2d})$ with the topology of weak convergence, which enables us to define open and closed sets of $\cP(\RR^{2d})$. 

Recall that Lemma \ref{thm:1} shows that the replica exchange Langevin diffusion $\{Z_t\}_{t \ge 0} $ is reversible. This fact enables us to apply the Donsker-Varadhan theory \citep{donsker1975asymptotic}, which implies that $\{Z_t\}_{t \ge 0} $ obeys LDP.  Formally, we have the following theorem.
\begin{theorem}\label{def:3} The sequence of empirical measures $\{ \nu_T \}_{T\geq 0}$ of the replica exchange Langevin diffusion $\{Z_t\}_{t \ge 0}$ defined in \eqref{empirical} obeys LDP. That is to say, for all open sets $\cO $  and closed sets $\cF$  in $ \cP(\RR^{2d})$, the following inequalities hold,
\$
\liminf_{T\to \infty} \frac{1}{T}\log \PP(\nu_T \in \cO) \ge -\inf_{\nu \in \cO} I^a(\nu),\quad  
\limsup_{T\to \infty} \frac{1}{T}\log \PP(\nu_T \in \cF) \le -\inf_{\nu \in \cF} I^a(\nu).
\$
Here $ I^a(\cdot): \cP(\RR^{2d}) \rightarrow [0,\infty]$ is the rate function, which takes the form
\#  \label{rate}
I^a(\nu) &= \underbrace{\int \tau_1 \cdot \bigl\| \nabla_{x_1} \sqrt{{\ud\nu}/{d\pi}(x_1,x_2)} \bigr\|^2+\tau_2 \cdot  \bigl\| \nabla_{x_2} \sqrt{{\ud\nu}/{\ud\pi}(x_1,x_2)} \bigr\|^2 \ud \pi(x_1,x_2)  }_{\displaystyle I^0(\nu)} \notag\\
&\qquad+\underbrace{\int {a}/{2}\cdot s(x_1,x_2) \cdot  \bigl(\sqrt{{\ud\nu}/{\ud\pi}(x_2,x_1)}-\sqrt{{\ud\nu}/{\ud\pi}(x_1,x_2)}\bigr)^2 \ud \pi(x_1,x_2)}_{\dr Acceleration~Effect}
\# 
for all $\nu \ll \pi$, and $I^a(\nu) =\infty$ otherwise.
\end{theorem}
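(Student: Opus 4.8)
The plan is to obtain the LDP directly from the Donsker--Varadhan theory for empirical (occupation) measures of Markov processes, and then to identify the rate function with the Dirichlet form $\mathscr{E}^a$ already computed in Theorem \ref{prop:2}. First I would collect the structural properties of $\{Z_t\}_{t\ge0}$ that the Donsker--Varadhan machinery needs. By Lemma \ref{thm:1}, $\{Z_t\}_{t\ge0}$ is a Markov process on $\RR^{2d}$ whose generator $\mathscr{L}^a$ splits into a diffusion part $\mathscr{L}^a_1+\mathscr{L}^a_2$ and a bounded nonlocal (jump) part $\mathscr{L}^a_s$ --- bounded because $0\le s(x_1,x_2)\le1$ and $a$ is a constant, so the total swapping rate never exceeds $a$ --- and moreover $\{Z_t\}_{t\ge0}$ is $\pi$-reversible with unique invariant measure $\pi$. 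The one ingredient beyond the classical compact-space statement is exponential tightness of $\{\nu_T\}_{T\ge0}$ on $\cP(\RR^{2d})$, for which I would use the dissipativity condition \eqref{ass2} from Assumption \ref{ass1} (in force throughout this section): the coercive function $V(x_1,x_2)=\|x_1\|^2+\|x_2\|^2$ is a Lyapunov function for $\mathscr{L}^a$, since the jump part annihilates $V$ by symmetry while the diffusion part satisfies $\mathscr{L}^a_1V+\mathscr{L}^a_2V\le -cV+C$ as a consequence of $\langle x,\nabla U(x)\rangle\ge\alpha\|x\|^2-\beta$; a standard Lyapunov argument then converts this drift condition into non-explosion and exponential tightness of the occupation measures. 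Together with the irreducibility of $\{Z_t\}_{t\ge0}$ (immediate from the nondegenerate Brownian parts), this yields the full LDP --- lower bound over all open sets, upper bound over all closed sets --- for some good rate function $I^a$; see \citep{donsker1975asymptotic} and its extensions to non-compact state spaces.

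Second, I would identify $I^a$. The Donsker--Varadhan rate function is given abstractly by the variational formula $I^a(\nu)=\sup_{u}\int\bigl(-\mathscr{L}^au/u\bigr)\ud\nu$, the supremum ranging over positive $u$ in the domain of $\mathscr{L}^a$. Reversibility makes this supremum explicit: by the classical Dirichlet-form representation of the level-2 rate function for $\pi$-symmetric Markov processes, the optimizer is $u=\sqrt{\ud\nu/\ud\pi}$ whenever $\nu\ll\pi$ and $\sqrt{\ud\nu/\ud\pi}$ lies in the domain of the Dirichlet form, in which case $I^a(\nu)=\mathscr{E}^a\bigl(\sqrt{\ud\nu/\ud\pi}\bigr)$; in all other cases $I^a(\nu)=+\infty$. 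The only point requiring attention here is that $\mathscr{L}^a$ is not a pure diffusion generator --- it carries the nonlocal term $\mathscr{L}^a_s$ --- so the Carr\'e du Champ operator $\Gamma^a$, and hence $\mathscr{E}^a$, inherit a jump contribution; but this is precisely the object Theorem \ref{prop:2} already evaluated.

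Finally, substituting the explicit form from Theorem \ref{prop:2} with $f=\sqrt{\ud\nu/\ud\pi}$,
\[
\mathscr{E}^a\bigl(\sqrt{\ud\nu/\ud\pi}\bigr)=\int\Bigl(\tau_1\bigl\|\nabla_{x_1}\sqrt{(\ud\nu/\ud\pi)(x_1,x_2)}\bigr\|^2+\tau_2\bigl\|\nabla_{x_2}\sqrt{(\ud\nu/\ud\pi)(x_1,x_2)}\bigr\|^2\Bigr)\ud\pi(x_1,x_2)+\int\frac{a}{2}\,s(x_1,x_2)\bigl(\sqrt{(\ud\nu/\ud\pi)(x_2,x_1)}-\sqrt{(\ud\nu/\ud\pi)(x_1,x_2)}\bigr)^2\ud\pi(x_1,x_2),
\]
which is exactly the decomposition into $I^0(\nu)$ plus the acceleration term displayed in \eqref{rate}; the boundary conventions agree, since the right-hand side of \eqref{rate} is naturally read as $+\infty$ when $\nu\not\ll\pi$ or the integrals diverge. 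I expect the main obstacle to be the first step: making rigorous the passage from the compact-space Donsker--Varadhan LDP to the full LDP on the non-compact space $\RR^{2d}$, that is, establishing exponential tightness of $\{\nu_T\}_{T\ge0}$ via the dissipativity-based Lyapunov function and verifying the hypotheses behind the open-set lower bound. The identification of the rate function with $\mathscr{E}^a(\sqrt{\ud\nu/\ud\pi})$ and the concluding substitution are then routine, the latter being a direct application of Theorem \ref{prop:2}.
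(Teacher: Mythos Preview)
Your proposal is correct and follows essentially the same route as the paper: invoke reversibility from Lemma \ref{thm:1}, apply the Donsker--Varadhan theory to obtain the LDP with rate function $\mathscr{E}^a\bigl(\sqrt{\ud\nu/\ud\pi}\bigr)$ (the paper phrases this via $\|\sqrt{-\mathscr{L}^a}(\sqrt{\ud\nu/\ud\pi})\|_\pi^2$ and then reduces it to the Dirichlet form, which is the same identification you make), and finally substitute the explicit expression for $\mathscr{E}^a$ from Theorem \ref{prop:2}. One small divergence worth flagging: you invoke the dissipativity part of Assumption \ref{ass1} to build a Lyapunov function and secure exponential tightness on the non-compact state space $\RR^{2d}$, whereas the paper's proof simply cites Donsker--Varadhan without addressing tightness and in fact asserts that the theorem holds independently of Assumption \ref{ass1}; on this point your treatment is the more careful one.
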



See \S\ref{B3.5} of the appendix for a detailed proof of theorem \ref{def:3}. The LDP rate function $I^a(\cdot)$ in Theorem \ref{def:3} characterizes the rate of convergence for the empirical measures $ \{\nu_T\}_{T\ge 0} $ towards its population version, which is the invariance distribution of $\{Z_t\}_{t\ge 0}$. More specific, it quantifies the exponential rate of decay of the probability that empirical empirical measures $ \{\nu_T\}_{T\ge 0} $ deviate from the invariant distribution $\pi$. Recall that, under weak topology, $\cP(\RR^{2d})$ is metrizable via the L\'evy-Prokhorov metric $d_{\rm LP}(\cdot,\cdot)$ \citep{billingsley2013convergence}.  We use $\cB_r$ to denote the open ball centered at $ \pi$ with radius $r $ in the metric space $ (\cP(\RR^{2d}), d_{\rm LP})$. Then $\cB_r^{\rm c} $ is a closed set and $\bar{\cB}_r^{\rm c} $ is open. Then according to Theorem \ref{def:3}, we have 
\$
\liminf_{T\to \infty} \frac{1}{T}\log \PP\bigl(d_{\rm LP}(\nu_T,\mu) > r\bigr)= \liminf_{T\to \infty}  \frac{1}{T}\log \PP(\nu_T \in \bar{\cB}_r^c) \ge -\inf_{\nu \in \bar{\cB}_r^c} I(\nu), \\
\limsup_{T\to \infty} \frac{1}{T}\log \PP\bigl(d_{\rm LP}(\nu_T,\mu)\ge r\bigr)= \limsup_{T\to \infty} \frac{1}{T}\log \PP(\nu_T \in \cB_r^c) \le -\inf_{\nu \in \cB_r^c} I(\nu).
\$
If we ignore the slight difference between $ \cB_r^c $ and $\bar{\cB}_r^c $, when $T $ is large, the probability $\PP(d_{\rm LP}(\nu_T,\mu)\ge r )$ has approximated scale of $\exp(-T\inf_{\nu \in \cB_r^c} I(\nu))$. That is to say, $ \inf_{\nu \in \cB_r^c} I(\nu)$ characterizes the exponential rate of decay of the probability $\PP(d_{\rm LP}(\nu_T,\mu)\ge r )$ as $T$ goes to infinity. A larger rate function implies a faster exponential rate of decay. Note that LDP includes both upper and lower bounds in Theorem \ref{def:3} and hence, the exponential rate of convergence characterized by LDP is tight.

To see how swapping accelerates the convergence of the empirical measure towards its invariant distribution, note that the LDP rate function in \eqref{rate} decomposes into two terms. The first term corresponds to the LDP rate function of replica exchange Langevin diffusion without swapping, that is, $a=0$. In specific, it is the sum of two nonnegative terms, which characterize the individual dynamics of the two particles, respectively. Meanwhile, the second term in \eqref{rate} is nonnegative and boosts the LDP rate function, which accelerates the convergence of the empirical measure. Similar to the convergence of the $\chi^2$-divergence in Theorem \ref{prop:2}, a larger swapping intensity $a$ yields a more significant acceleration effect. The acceleration effect also degenerates when the Radon-Nykodim derivative $ \ud \nu/\ud \pi $ is symmetric.  Finally, it is worth noting that, like Theorem \ref{prop:2}, Theorem \ref{def:3} does not rely on Assumption \ref{prop:2} and holds for all objective functions $U(\cdot)$. In summary, Theorems \ref{prop:2} and \ref{def:3} together justify the benefit of swapping from two different perspectives. 

Note that the LDP rate function given in Theorem \ref{def:3} takes a different form compared with the one obtained in \cite{dupuis2012infinite}. Despite the different the forms, the two results are not contradictory. Our formula is derived from the Donsker-Varadhan theory, which uses the assumption of reversibility, while theirs is obtained from a dual formulation of the LDP rate function. Moreover, one can show that they are equivalent.


\section{Numerical Results} \label{NR}

\begin{figure*}[b] 
\centering
\begin{tabular}{ccc}
\includegraphics[width=0.3\textwidth]{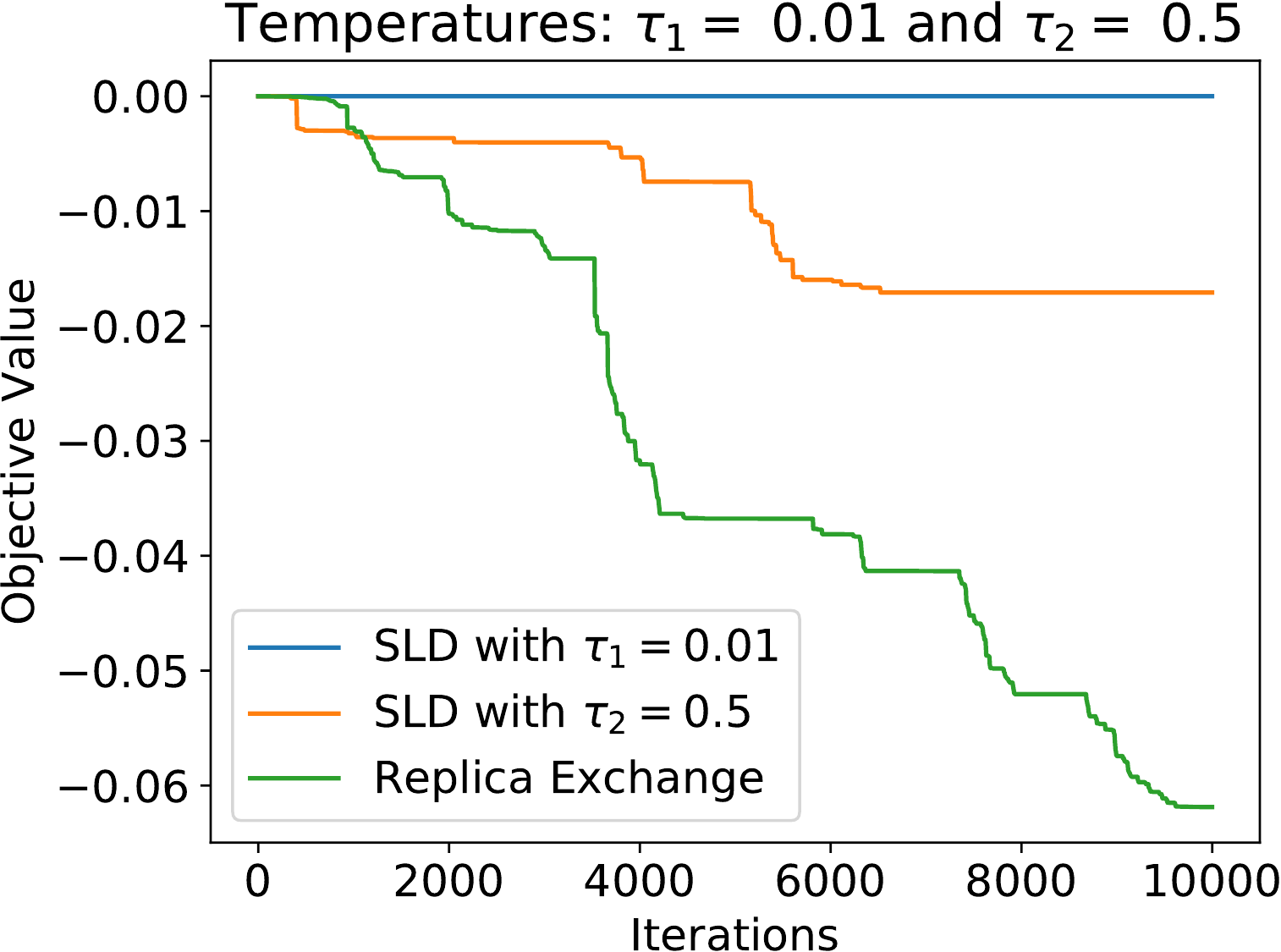}
&
\includegraphics[width=0.3\textwidth]{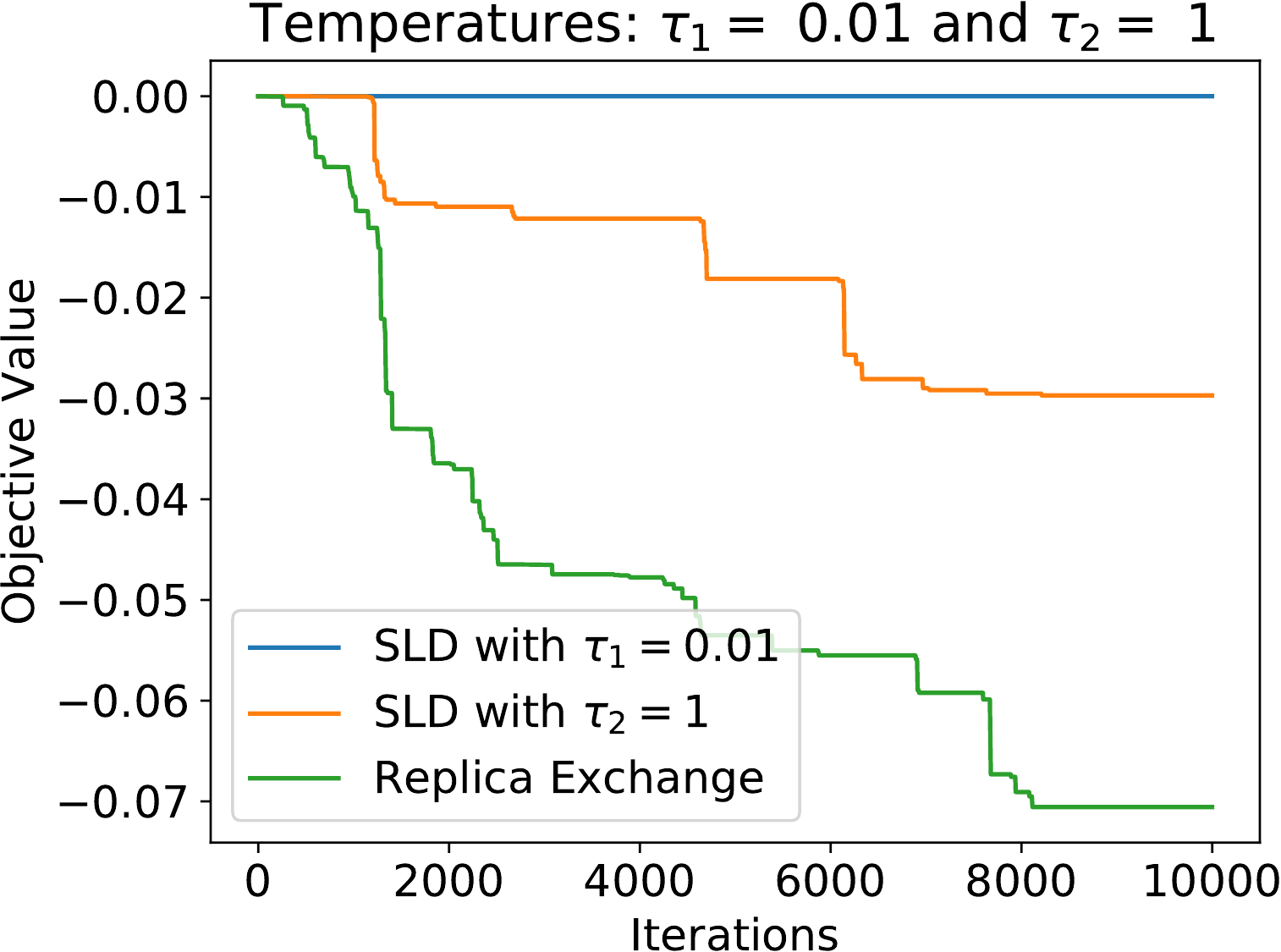}
&
\includegraphics[width=0.3\textwidth]{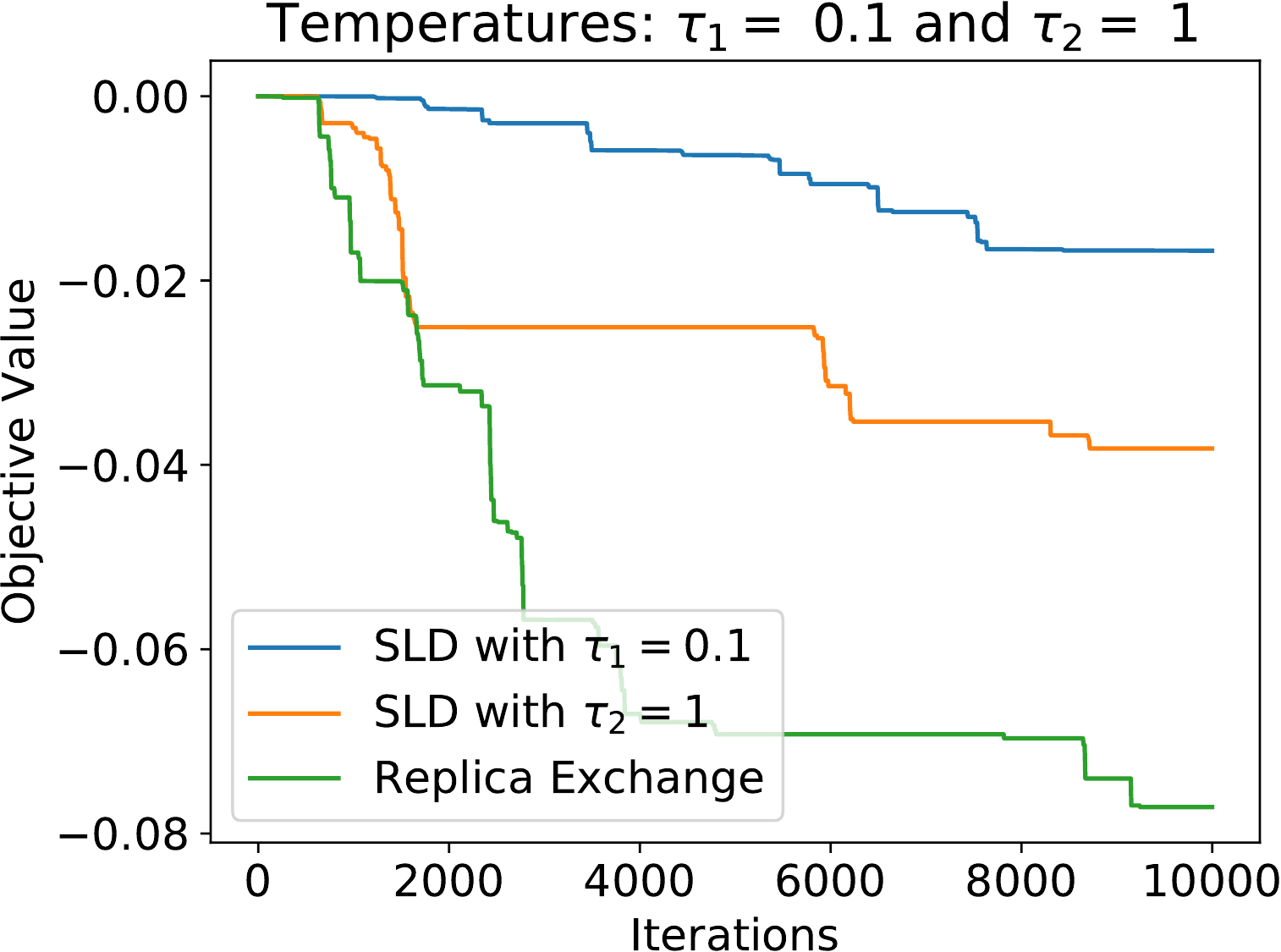}\\
(a) & (b) &(c)  \\
\end{tabular}
\caption{\small Performance of three algorithms for fixed objective function with different temperatures.}
\label{f1}
\end{figure*}

In this section, we lay out the experiment results that support our theory in \S\ref{TA}. In our experiments, we compare the performance of a standard Langevin diffusion with a low temperature $ \tau_1 $, a standard Langevin diffusion with a high temperature $ \tau_2 $, and a replica exchange Langevin diffusion with temperatures $\tau_1 $ and $\tau_2$. We implement the experiments with several different temperature parameters. The results are plotted in Figure \ref{f1}. These figures show that the replica exchange Langevin diffusion outperforms the standard Langevin diffusions driven by a single temperature. See \S\ref{DINE} of the appendix for more detailed information and results of our experiments.

\appendix

\section{Background} \label{sec:back} 
In this section, we provide the background of  Markov process, which is necessary for the theoretical analysis. We analyze the properties of Markov processes from the viewpoint of Markov semigroup. A semigroup is a family of linear operators on Banach space $\cC(\RR^d)$, the space of bounded continuous functions on $\RR^d$ equipped with the uniform norm. Formally, we have the following definition. 
\begin{definition}[Semigroup of Operators]\label{def:1}  
A family $ \{P_t\}_{t\ge 0}$ of linear operators on $\cC(\RR^d)$ is called a semigroup if and only if it satisfies the following conditions:
\begin{itemize}
\item[(1)] For $t = 0$, we have $P_t =I $, which is the identical mapping on $\cC(\RR^d)$.
\item[(2)] The map $t \to P_t$ is continuous in the sense that, for all $f \in \cC(\RR^d)$, $t \to P_t(f) $ is a continuous map from $ \RR^+ $ to $\cC(\RR^d)$.
\item[(3)] For all $ f \in \cC(\RR^d)  $ and $s,t >0$, we have $P_{s+t}(f)=P_s(P_t (f))$.
\end{itemize}
Moreover, a semigroup of operators is Markov if and only if the following additional conditions are satisfied:
\begin{itemize}
\item[(4)] For all $t >0$, we have $P_t (\ind)=\ind $, where $\ind$ is the constant function with value one. \label{item:1}\\
\item[(5)] For all $f\ge 0$, we have $P_t (f)\ge 0$.
\end{itemize}
\end{definition}

Recall that a Markov process on $\RR^d $ is a stochastic process $\{X_t\}_{t\ge 0} $ satisfying 
\#   \label{mp}
\PP(X_t \in \Gamma\,|\, \mathscr{F}_s)=\PP(X_t \in \Gamma\,|\, X_s)
\#
for all measurable sets $\Gamma $ and times $ s<t $. Here $\mathscr{F}_s = \sigma (X_u, u\le s ) $, which is the natural filtration of the Markov process $\{X_t\}_{t\ge 0}$. Given a Markov process $\{X_t\}_{t\ge 0} $, we define a family of operators $\{P_t\}_{t\ge 0}$ on $C(\RR^d)$, the space of bounded continuous functions on $\RR^d$, as follows
\#\label{eq:w00}
P_t\bigl(f(x)\bigr)=\EE\bigl[f(X_t)\,|\,X_0=x\bigr].
\# 
According to the Markov property in \eqref{mp}, we have
\# \label{sgp}
 P_{t+s}(f)=P_{t}(P_s(f))
 \# 
 and hence,  $\{P_t\}_{t\ge 0}$ indeed forms a semigroup. We call $\{P_t\}_{t\ge 0}$ the Markov semigroup associated with the Markov process $\{X_t\}_{t\ge 0} $. 

Given a Markov semigroup, we can define its infinitesimal generator, a linear operator that describes the Markov semigroup's behavior for an infinitesimal $t$. Formally, we have the following definition. See, e.g.,  \cite{revuz2013continuous} for more details. 
\begin{definition}[Infinitesimal Generator of Markov Semigroup]\label{def:generator}
 The infinitesimal generator $\mathscr{L}$ of a Markov semigroup $\{P_t\}_{t\ge 0}$ is defined by  
\$
\mathscr{L}(f)=\lim_{t \to 0} \frac{P_t(f)-f}{t}
\$
for all $f \in \cD(\mathscr{L}) $. Here $\cD(\mathscr{L})$ denotes the subset of $\cC(\RR^d) $ such that the above limit exists. 
\end{definition}
Intuitively, the infinitesimal generator can be viewed as the derivative of the Markov semigroup at time $t=0$. It uniquely determines the Markov semigroup according to the semigroup property \eqref{sgp}. In the following, we define the invariant and reversible measure with respect to a Markov semigroup. 
\begin{definition}[Invariance and Reversibility]\label{def:2} For a Markov semigroup $\{P_t\}_{t\ge 0}$ whose infinitesimal generator is $\mathscr{L}$, a probability measure $\pi$ is invariant with respect to $\{P_t\}_{t\ge 0}$ if and only if
\$
\int \mathscr{L} (f) \ud \pi=0
\$
for all $t\ge0$ and $f \in \cC(\RR^d) $. A probability measure $\pi$ is reversible with respect to $\{P_t\}_{t\ge 0}$ if and only if 
\$
\int f\mathscr{L} (g)\ud \pi=\int g \mathscr{L} (f)\ud \pi
\$
for all $f,g \in \cD(\mathscr{L})  $.
Note that reversibility implies invariance when we plug $g=\ind $ into the definition of reversibility. 
\end{definition}
We remark that the standard definitions of invariance and reversibility are based on the Markov semigroup $\{ P_t\}_{t\ge 0}  $ instead of its infinitesimal generator $\mathscr{L}$, which is equivalent to the following definition. The definitions via infinitesimal generator simplify the proof of our lemmas and theorems. 
\begin{definition}[Equivalent Definitions of Invariance and Reversibility ]
For a Markov semigroup $\{P_t\}_{t\ge 0}$, a probability measure $\pi$ is invariant with respect to $\{P_t\}_{t\ge 0}$ if and only if 
\$ 
\int P_tf\ud \pi=\int f \ud \pi
\$
for every $t\ge 0$ and $f \in C(\RR^d)  $.  
Reversibility is another important concept in Markov semigroup theory. A probability measure $\mu$ is reversible for $\{P_t\}_{t\ge 0}$ if and only if 
\$
\int fP_tg\ud \pi=\int g P_t f\ud \pi
\$
for all $t\ge0$ and $f,g \in C(\RR^d)$. Based on the Definition \ref{def:generator}, we can show that this definition is equivalent to Definition \ref{def:2}.
\end{definition}

In the definitions of Markov semigroup and associated concepts, we restrict the test function $f$ to $\cC(\RR^d)$, the space of bounded continuous functions. However, for $\pi $ being the invariant distribution with respect to $\{P_t\}_{t\ge 0}$, we can always (slightly) extend the definition of Markov semigroup to a larger function space $L^2(\pi)$. We use $\{\bar{P}_t\}_{t\ge 0} $ and $ \bar{\mathscr{L}} $ to denote the corresponding extended Markov semigroup and infinitesimal generator. One can show that, when $\pi$ is reversible, $\bar{\mathscr{L}} $ is a self-adjoint operator in the Hilbert space $L^2(\pi)$, which is equipped with the inner product $ \la f,g \ra_{\pi}=\int fg\ud \pi $. Furthermore, one can show that $\bar{\mathscr{L}}$ is positive semidefinite. In this paper, for the convenience of discussion, we do not distinguish the slight difference between $\mathscr{L}$ and its extension $\bar{\mathscr{L}} $, and also the difference of $\cD(\mathscr{L})$, $\cC(\RR^d) $, and $ L^2(\pi) $, since $\cC(\RR^d) $ is dense in $ L^2(\pi) $ and $\cD(\mathscr{L})$ is dense in $\cC(\RR^d) $ according to Hille-Yosida theorem \citep{phillips1957functional}. When we choose a test function, we also assume that all the related operations are well-defined.

\section{Discretization Analysis}\label{sec:disc}
In this section, we lay out the discretization error analysis of replica exchange Langevin diffusion. Our previous discussion in \S\ref{sec:chisquare} and \S\ref{sec:LDP} focuses on the continuous-time process $\{Z_t\}_{t \ge 0}$ defined in \eqref{eq:10}-\eqref{eq:11}, which needs to be discretized in practice. However, swapping the positions of particles makes it hard to analyze the discretization error, since there is no way to incorporate the swapping of positions into one unified stochastic differential equation. To overcome this difficulty, note that the two particles in the replica exchange Langevin diffusion swap their positions with a specific rate, which is equivalent in distribution to swapping their temperatures with the same rate \citep{dupuis2012infinite}. We name the later equivalent process as the temperature swapping Langevin diffusion. With a slight abuse of notations, we still use $\{Z_t\}_{t\ge 0}$ to denote the positions of the two particles in the temperature swapping Langevin diffusion. In specific, $\{Z_t\}_{t\ge 0}$ is characterized by the following stochastic differential equation, 
\#  \label{tseq}
\ud Z_t=-\nabla U(Z_t)\ud t+\Sigma_t\ud W_t. 
\#
Here $\Sigma_t$ is a random matrix that switches between the diagonal matrices $ \diag \{ \sqrt{2\tau_1} \cdot I_d,  \sqrt{2\tau_2} \cdot I_d\} $ and  $ \diag \{ \sqrt{2\tau_2} \cdot I_d,  \sqrt{2\tau_1} \cdot I_d\}$, where $I_d $ denotes the $d$-dimensional identity matrix. The matrix $ \Sigma_t$ characterizes the switching of temperatures, and the rate of switching is same as the one specified in \eqref{eq:11} and \eqref{eq:w250}. Note that the temperature swapping Langevin diffusion in \eqref{tseq} is equivalent to the replica exchange Langevin diffusion defined in \eqref{eq:10}-\eqref{eq:11} in distribution. Furthermore, the stochastic differential equation in \eqref{tseq} enables us to define a continuous-time interpolation of the discrete-time sequence, which is useful in the discretization error analysis. Hence, for the purpose of optimization, we only need to discretize and analyze the temperature swapping Langevin diffusion in \eqref{tseq}.

Note that \eqref{tseq} defines a Markov jump diffusion, which can be discretized using the Euler scheme. We denote by $(Z^{(1)}(k),Z^{(2)}(k))$ and $(\tau^{(1)}(k),\tau^{(2)}(k))$ the positions and temperatures of the two particles at discrete time $k$, respectively. Let $(Z^{(1)}(0),Z^{(2)}(0))=Z_0$ and $(\tau^{(1)}(0),\tau^{(2)}(0))=(\tau_1,\tau_2)$ be the initialization. For all integers $k\ge0$, we update the positions of the two particles sequentially as following, 
\# \label{alg1}
Z^{(1)}{(k+1)}&=Z^{(1)}{(k)}-\eta \cdot \nabla U\bigl(Z^{(1)}{(k)}\bigr) +\sqrt{2\eta \cdot \tau^{(1)}(k) }\cdot \xi^{(1)}(k), \notag\\
Z^{(2)}{(k+1)}&=Z^{(2)}{(k)}-\eta \cdot \nabla U\bigl(Z^{(2)}{(k)}\bigr) +\sqrt{2\eta \cdot \tau^{(2)}(k)}\cdot \xi^{(2)}(k),
\#
and swap temperatures according to the following rule,
\#\label{alg2}
\bigl(\tau^{(1)}(k+1),\tau^{(2)}(k+1)\bigr)&=\bigl(\tau^{(2)}(k),\tau^{(1)}(k)\bigr)~ \text{with probability }~ a\cdot \eta \cdot s\bigl(Z^{(1)}{(k)},Z^{(2)}{(k)}\bigr),  \\
\bigl(\tau^{(1)}(k+1),\tau^{(2)}(k+1)\bigr)&=\bigl(\tau^{(1)}(k),\tau^{(2)}(k)\bigr)~ \text{with probability }~ 1-a\cdot \eta \cdot s\bigl(Z^{(1)}{(k)},Z^{(2)}{(k)}\bigr).\notag
\#
Here $\{\xi^{(1)}(k) \}_{k\ge 0}$ and $\{\xi^{(2)} (k)\}_{k\ge 0} $ are two sequences of independent and identically distributed $d$-dimensional Gaussian random vectors, $\eta>0$ is the stepsize, while $a$ and $s(\cdot,\cdot)$ are specified in \eqref{eq:11}. Note that $\eta$ and $a$ are chosen such that $\eta\cdot a<1$ in order to define a valid probability.  
 
Hereafter we denote by  $\{ Z^{\eta}(k)\}_{k \ge 0} $ the iterates of the algorithm specified in \eqref{alg1} and \eqref{alg2}. We use the superscript $\eta$ to emphasize that the iterates depend on the stepsize $\eta$. Let $\{Z^{\eta}_t\}_{t \ge 0}$ be the continuous-time interpolation of $\{ Z^{\eta}(k)\}_{k \ge 1} $, which is a continuous-time stochastic process defined as
\# \label{linearint}
Z^{\eta}_t=Z_0-\int_{0}^{t}\nabla U\bigl( Z^{\eta}_{\lfloor s/\eta  \rfloor \eta}\bigr)\ud s+ \int_{0}^{t} \Sigma^{\eta}_{\lfloor s/\eta  \rfloor \eta }  \ud W_s,
\# 
where $\Sigma^{\eta}_{k \eta} = \diag \{ \sqrt{2\tau^{(1)}(k)} \cdot I_d,  \sqrt{2\tau^{(2)}(k)} \cdot I_d\}$. Then for all integers $k\ge 0$ and $t=k\eta$, we have $Z^{\eta}_t=Z^{\eta}_{k\eta}=Z^{\eta}(k)$.

In the following theorem, we characterize the discretization error by the mean squared error between the continuous-time temperature swapping Langevin diffusion in \eqref{tseq} and the continuous-time interpolation in \eqref{linearint} of the discrete-time algorithm in \eqref{alg1} and \eqref{alg2}. In specific, we consider a fixed time interval $[0,T]$ and upper bound the mean squared error $ \EE[ \| Z_t-Z^{\eta}_t\|^2]$ for all $t\in [0,T]$. The following theorem shows that this error grows linearly with respect to the stepsize $\eta$. 
\begin{theorem} \label{thm}

Under Assumption \ref{ass1}, there exists a constant $\gamma(d,\tau_1,\tau_2, a, L,\alpha,\beta,T)$ that only depends on the dimension $d$, temperature parameters $\tau_1$ and $\tau_2$, swapping intensity $a$, smoothness constant $L$ and dissipative constants $(\alpha,\beta)$ of $U(\cdot)$, and length of the time interval $T$, such that for all $ t \in [0,T]$,   
\$
\EE \bigl[ \|Z_t-Z^{\eta}_t \|^2 \bigr]\le \gamma(d,\tau_1,\tau_2, a, L,\alpha,\beta,T)\cdot \eta,
\$
provided that the stepsize $\eta$ satisfies $0<\eta<\alpha /L^2$.
\end{theorem}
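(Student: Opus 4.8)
The plan is to run the standard coupling--Gr\"onwall argument for the Euler scheme (as in \cite{raginsky2017non}), the one new ingredient being control of the temperature switching. First I would record a priori second moments. Because $\tr(\Sigma_t\Sigma_t^\top)=2d(\tau_1+\tau_2)$ no matter which of the two diagonal configurations $\Sigma_t$ takes, the dissipativity condition \eqref{ass2} gives $\tfrac{\ud}{\ud t}\EE[\|Z_t\|^2]\le-2\alpha\,\EE[\|Z_t\|^2]+2\beta+2d(\tau_1+\tau_2)$, hence $\sup_{t\in[0,T]}\EE[\|Z_t\|^2]<\infty$; expanding $\EE[\|Z^\eta(k+1)\|^2]$ from \eqref{alg1}, the restriction $\eta<\alpha/L^2$ is precisely what makes the one-step recursion a contraction up to an additive constant, so $\sup_{k:\,k\eta\le T}\EE[\|Z^\eta(k)\|^2]<\infty$. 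Smoothness (read with the intended Lipschitz exponent) then controls $\EE[\|\nabla U(Z_t)\|^2]$ and $\EE[\|\nabla U(Z^\eta(k))\|^2]$ by affine functions of these moments.

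Next, subtracting \eqref{linearint} from \eqref{tseq},
\[
Z_t-Z^\eta_t=-\int_0^t\!\bigl(\nabla U(Z_s)-\nabla U(Z^\eta_{\lfloor s/\eta\rfloor\eta})\bigr)\ud s+\int_0^t\!\bigl(\Sigma_s-\Sigma^\eta_{\lfloor s/\eta\rfloor\eta}\bigr)\ud W_s .
\]
For the drift, Cauchy--Schwarz followed by inserting $\pm\nabla U(Z^\eta_s)$ splits it into $L\|Z^\eta_s-Z_s\|$, which drives the Gr\"onwall loop, and $L\|Z^\eta_s-Z^\eta_{\lfloor s/\eta\rfloor\eta}\|$, a one-step increment with $\EE[\|Z^\eta_s-Z^\eta_{\lfloor s/\eta\rfloor\eta}\|^2]\le C_1\eta$ by the moment bounds (the $O(\eta^2)$ drift part dominated by the $O(\eta)$ Brownian part). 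For the diffusion term, It\^o isometry gives $\EE\bigl[\|\int_0^t(\Sigma_s-\Sigma^\eta_{\lfloor s/\eta\rfloor\eta})\ud W_s\|^2\bigr]=\EE\bigl[\int_0^t\|\Sigma_s-\Sigma^\eta_{\lfloor s/\eta\rfloor\eta}\|_F^2\ud s\bigr]=2d(\sqrt{2\tau_2}-\sqrt{2\tau_1})^2\cdot\EE\bigl[\mathrm{Leb}\{s\le t:\Sigma_s\neq\Sigma^\eta_{\lfloor s/\eta\rfloor\eta}\}\bigr]$, since $\Sigma_s-\Sigma^\eta_{\lfloor s/\eta\rfloor\eta}$ is either $0$ or $\pm$ a fixed matrix. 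With $u(t)=\EE[\|Z_t-Z^\eta_t\|^2]$, these combine to $u(t)\le A\eta+A\,\EE[\mathrm{Leb}\{s\le T:\Sigma_s\neq\Sigma^\eta_{\lfloor s/\eta\rfloor\eta}\}]+B\int_0^t u(s)\,\ud s$ with $A,B$ depending on $(d,\tau_1,\tau_2,L,\alpha,\beta,T)$, and Gr\"onwall's inequality yields $u(t)\le\gamma\eta$ once the expected ``mismatch time'' is shown to be $O(\eta)$.

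The whole weight of the theorem therefore rests on bounding $\EE[\mathrm{Leb}\{s\le T:\Sigma_s\neq\Sigma^\eta_{\lfloor s/\eta\rfloor\eta}\}]$ by $C(d,\tau_1,\tau_2,a,L,\alpha,\beta,T)\,\eta$. I would couple the continuous-time switching clock of \eqref{eq:11} with the per-step switches of \eqref{alg2} by running one rate-$a$ Poisson process whose jumps serve as switch attempts for both chains, accepted with a shared uniform; then the configurations agree except for (i) the delay between a continuous switch at $\sigma\in(k\eta,(k+1)\eta)$ and its registration by the discrete chain at $(k+1)\eta$ — a mismatch of length $\le\eta$ that then re-synchronizes, and since the number of switches on $[0,T]$ is stochastically bounded (acceptance rate $\le a$ because $s\le1$), this sums to $O(\eta)$ in expectation — and (ii) attempts on which the two acceptance decisions differ, because the discrete probability uses $s$ at the grid point $Z^{(1)}(k)$ while the continuous chain uses $s$ along its slightly displaced path (and because $\ge2$ jumps may fall in one interval). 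Here the local Lipschitz continuity of $s$ from \eqref{ooo} — whose logarithm grows with $U$, so these estimates are run on the high-probability event $\{\sup_{s\le T}(\|Z_s\|\vee\|Z^\eta_s\|)\le R\}$, its complement negligible by the moment bounds — together with the a priori moments is used to bound the frequency and duration of such episodes, a mismatch episode being cleared the first time either chain makes an offsetting switch.

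The step I expect to be the main obstacle is exactly this mismatch-time estimate: the switching process is a pure-jump process driven by the positions rather than a diffusion, so the usual frozen-coefficient increments do not apply to it, and one must keep the two switching clocks coupled while showing that a discrepancy in their acceptance decisions — incurred at positions that differ only by a diffusive $O(\sqrt\eta)$ — still contributes only at order $\eta$ to the expected total mismatch time. The truncation and moment bookkeeping that legitimizes the Lipschitz bound on $s$ is routine but is where most of the constant $\gamma(d,\tau_1,\tau_2,a,L,\alpha,\beta,T)$ comes from; the hypothesis $\eta<\alpha/L^2$ is used only in the a priori moment bound for the discrete chain.
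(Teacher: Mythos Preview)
Your overall architecture matches the paper exactly: subtract the two equations, split into drift and martingale parts by Cauchy--Schwarz, use $L$-smoothness to pull out $\|Z_s-Z^\eta_s\|$ plus a one-step increment $\|Z^\eta_s-Z^\eta_{\lfloor s/\eta\rfloor\eta}\|$ of order $\sqrt\eta$, invoke the discrete second-moment bound (this is the paper's Lemma~\ref{lem}, where the hypothesis $\eta<\alpha/L^2$ is used exactly as you say), reduce the It\^o term to the expected Frobenius-norm mismatch, and close with Gr\"onwall. On all of this you and the paper are doing the same thing.

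The substantive difference is how the temperature-mismatch term is handled. You propose to build an explicit synchronous coupling of the two switching clocks via a shared rate-$a$ Poisson process and shared uniforms, then separately account for (i) the $\le\eta$ delay before the discrete chain registers a continuous switch and (ii) attempts on which the two acceptance probabilities disagree, controlling the latter through the local Lipschitz continuity of $s(\cdot,\cdot)$ in~\eqref{ooo} together with a truncation to a high-probability compact set. The paper does none of this. It simply writes, for $s\in[k\eta,(k+1)\eta)$,
\[
\PP\bigl(\Sigma_s(j)\neq\Sigma^\eta_{k\eta}(j)\,\big|\,Z^\eta_{k\eta}\bigr)=a\cdot s\bigl(Z^{\eta(1)}_{k\eta},Z^{\eta(2)}_{k\eta}\bigr)\cdot(s-k\eta)+o(s-k\eta)\le a\,(s-k\eta)+o(s-k\eta),
\]
using only that the swap rate is uniformly bounded by $a$ because $s(\cdot,\cdot)\le1$. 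Integrating over the subinterval gives $O(a\eta^2)$, summing over $\lceil T/\eta\rceil$ subintervals gives $O(aT\eta)$, and this feeds directly into Gr\"onwall. No Lipschitz continuity of $s$, no truncation, no synchronization bookkeeping.

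What each approach buys: the paper's route is dramatically shorter and makes clear that the only feature of the swapping mechanism that matters for the discretization error is the uniform bound $s\le1$ on the acceptance function; the constant $\delta_3(\tau_1,\tau_2,a)$ depends on $a$ only linearly and does not involve $L,\alpha,\beta$ at all. Your route is more explicit about the underlying coupling and engages directly with the possibility that the two temperature configurations drift apart across successive intervals --- a point the paper's one-line estimate treats as if the configurations re-synchronize at each grid point, with the justification left implicit. Your plan would work, but the Lipschitz/truncation machinery you anticipate as ``the main obstacle'' is, in the paper's presentation, simply not needed.
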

The proof of Theorem \ref{thm} adapts from the proof framework of Theorem 5.13 in \cite{yin2010hybrid}. The main obstacle is that we establish a result for the high-dimensional case, while the original proof is for one dimension.

\section{Discussion and Conclusion} \label{DC}

In this paper, we apply the idea of replica exchange to the Langevin diffusion in the context of nonconvex optimization. In particular, we quantify the benefits of replica exchange compared with the standard Langevin diffusion from two perspectives, that is, the convergence of the $\chi^2$-divergence and the LDP of the empirical measures. We show from both perspectives that replica exchange accelerates the convergence towards the invariant distribution. In the following, we discuss several related issues.

\vskip4pt
{\noindent\bf Flat and sharp minima:} The Langevin diffusion yields an invariant distribution that concentrates around the global minima. In particular, as illustrated in \cite{keskar2016large}, flat minima generally achieves better generalization than sharp ones. Meanwhile, as observed in \cite{zhang2018theory}, the invariant distribution of the Langevin diffusion biases more towards the flat minima than the sharp minima. Intuitively, this observation follows from locally integrating the density of the invariant distribution, which is proportional to $\exp\{-U(x)/\tau\}$. As specified in \eqref{eq:w001}, the invariant distribution of the replica exchange Langevin diffusion takes the same form, and hence processes the same desired property of biasing towards the flat minima.

{\noindent\bf Choice of swapping intensity:} 
Recall that in \S\ref{TA} we prove that swapping accelerates the exponential rate of convergence of the $\chi^2$-divergence and increases the LDP rate function of the empirical measures. In particular, a large swapping intensity $a$ leads to a stronger acceleration effect. In continuous time, the swapping intensity $a$ should be as large as possible. However, in discrete time, to ensure the existence of a valid swapping probability in \eqref{alg2}, $a$ can only be as large as $1/\eta$, since by \eqref{eq:11} we have $s(\cdot,\cdot) \leq 1$. This constraint characterizes the tradeoff between the acceleration effect and the discretization effort, since more fine-grained discretization with a smaller stepsize $\eta$ allows for a larger swapping intensity $a$. Moreover, intuitively speaking, more frequent swapping makes the particles more volatile, which lowers the accuracy of discretization.

\vskip4pt
{\noindent\bf Choice of temperatures:}  In the replica exchange Langevin diffusion, we use two particles driven by low and high temperatures to achieve ``local exploitation'' and ``global exploration'', respectively. There is a fundamental tradeoff in the choice of the temperatures, especially the high temperature. In specific, a larger $\tau_2$ results in faster global exploration. However, if $\tau_2$ is too large, by the second equation in \eqref{eq:10}, the gradient term $\nabla U(\cdot) $ becomes negligible compared with the white noise term $\sqrt{2\tau_2}\ud W_t$. Consequently, the second particle ``blindly'' explores the whole domain $\RR^d$ uniformly at random, without adapting the geometry of the objective function. As discussed in \S\ref{BI}, most swaps happen only when better positions with smaller objective values are discovered by the second particle. Thus, if $\tau_2$ is too large, the global exploration is fast but ineffective.

\section{Detailed Proofs}   \label{detp}
In this section, we present the detailed proofs for all the theorems and lemmas in this paper.
\subsection{Proof of Lemma \ref{thm:1} }  \label{B1}
\begin{proof}
The dynamics in \eqref{eq:10}-\eqref{eq:11} defines a Langevin diffusion with jump. Its generator takes the form 
\# \label{generator1}
&\mathscr{L}^a \bigl(f(x_1,x_2)\bigr)=\underbrace{-\bigl\la \nabla_{x_1}f(x_1,x_2),\nabla_{x_1} U(x_1)   \bigr\ra+ \tau_1\Delta_{x_1}f(x_1,x_2)}_{\displaystyle\mathscr{L}^a_1\bigl(f(x_1,x_2)\bigr)} \notag \\
&\qquad\underbrace{-\bigl\la \nabla_{x_2}f(x_1,x_2),\nabla_{x_2} U(x_2)   \bigr\ra+ \tau_2\Delta_{x_2}f(x_1,x_2)}_{\displaystyle\mathscr{L}^a_2\bigl(f(x_1,x_2)\bigr)} +\underbrace{a \cdot s(x_1,x_2) \cdot \bigl( f(x_2,x_1)-f(x_1,x_2) \bigr)}_{\displaystyle\mathscr{L}^a_s\bigl(f(x_1,x_2)\bigr)}
\# 
and its domain $\cD(\mathscr{L}^a)=\cC_{\rm c}^2(\RR^{2d} )$, which is the space of all twice-differentiable functions with compact support. The first two terms $\mathscr{L}^a_1$ and $\mathscr{L}^a_2$ on the right-hand side of \eqref{generator1} correspond to the standard Langevin diffusion, while the last term $\mathscr{L}^a_s$ arises from swapping. 
To prove invariance and reversibility, by Definition \ref{def:2}, we only need to show that 
\# \label{eq:3.1}
&\int_{\RR^d }\int_{\RR^{d} } g(x_1,x_2)\mathscr{L}^a \bigl(f(x_1,x_2)\bigr)\ud \pi(x_1,x_2)\notag\\
&\qquad=\int_{\RR^d } \int_{\RR^{d} } f(x_1,x_2)\mathscr{L}^a \bigl(g(x_1,x_2)\bigr)\ud \pi(x_1,x_2)
\# 
for all $f,g \in  \cD(\mathscr{L}^a)$, where $\pi$ is defined in \eqref{inm}. 

Note that for the Laplacian term in $\mathscr{L}^a_1$ on the right-hand side of \eqref{generator1}, by integration by parts and the fact that $f$ and $g$ have compact support, we have that for all fixed $ x_2  \in \RR^d $,
\#\label{eq:w10}
&-\int_{\RR^d} g(x_1,x_2) \exp \bigl(-U(x_1)/\tau_1\bigr) \Delta_{x_1} f(x_1,x_2) \ud x_1\\
&\qquad = \int_{\RR^d}  \Bigl \la  \nabla_{x_1} \bigl[  g(x_1,x_2) \exp \bigl(-U(x_1)/\tau_1\bigr)   \bigr] ,  \nabla_{x_1} f(x_1,x_2)        \Bigr \ra \ud x_1\notag \\
 & \qquad = \int_{\RR^d} \Bigl \la \exp \bigl(-U(x_1)/\tau_1\bigr) \nabla_{x_1}g(x_1,x_2)-\exp \bigl(-U(x_1)/\tau_1\bigr) g(x_1,x_2)\nabla_{x_1}U(x_1),\nabla_{x_1} f(x_1,x_2)    \Bigr  \ra \ud x_1. \notag
\#
Recall that $\mu(x_1,x_2)$ is defined in \eqref{eq:mu}. Hence, \eqref{eq:w10} takes the equivalent form
\$
\int_{\RR^d } g(x_1,x_2)\mathscr{L}^a_1 \bigl(f(x_1,x_2)\bigr)\mu(x_1,x_2)\ud x_1=-\int_{\RR^d } \bigl\la \nabla_{x_1}f(x_1,x_2),\nabla_{x_1}g(x_1,x_2)   \bigr\ra \mu(x_1,x_2) \ud x_1,
\$
and hence,
\#\label{eq:w15}
&\int_{\RR^d } \int_{\RR^d } g(x_1,x_2)\mathscr{L}^a_1 \bigl(f(x_1,x_2)\bigr)\mu(x_1,x_2)\ud x_1 \ud x_2\notag\\
&\qquad=-\int_{\RR^d } \int_{\RR^d } \bigl\la \nabla_{x_1}f(x_1,x_2),\nabla_{x_1}g(x_1,x_2)   \bigr\ra \mu(x_1,x_2) \ud x_1\ud x_2. 
\#
By switching the positions of $f$ and $g$ in \eqref{eq:w15}, we have 
\#\label{eq:w20}
&\int_{\RR^d } \int_{\RR^d } f(x_1,x_2)\mathscr{L}^a_1 \bigl(g(x_1,x_2)\bigr)\mu(x_1,x_2)\ud x_1 \ud x_2 \notag\\
&\qquad=-\int_{\RR^d } \int_{\RR^d } \bigl\la \nabla_{x_1}g(x_1,x_2),\nabla_{x_1}f(x_1,x_2)   \bigr\ra \mu(x_1,x_2) \ud x_1\ud x_2. 
\#
By \eqref{eq:w15} and \eqref{eq:w20} we have 
\#\label{eq:w25}
&\int_{\RR^d }\int_{\RR^{d} } g(x_1,x_2)\mathscr{L}^a_1 \bigl(f(x_1,x_2)\bigr)\mu(x_1,x_2)\ud x_1\ud x_2\notag\\
&\qquad=\int_{\RR^d } \int_{\RR^{d} } f(x_1,x_2)\mathscr{L}^a_1 \bigl(g(x_1,x_2)\bigr)\mu(x_1,x_2)\ud x_1\ud x_2.
\#
By the same derivation, \eqref{eq:w25} also holds for $ \mathscr{L}^a_2 $. Thus, it remains to prove that \eqref{eq:w25} holds for $ \mathscr{L}^a_s $ as well. For notational simplicity, in the following we use $f^+$ and $f^- $ to denote $f(x_1,x_2)$ and $f(x_2,x_1)$ respectively, and define $g^+$, $g^-$, $\mu^+$, and $\mu^- $ in a similar way. Then we have 
\#\label{eq:w01}
& \int_{\RR^d}\int_{\RR^d}g^+ \big( 1\land (\mu^-/\mu^+) \big) (f^--f^+  )\mu^+ \ud x_1\ud x_2 \notag\\
&\qquad=\int \int_{\{(x_1,x_2):\mu^-\leq \mu^+  \}} g^+\mu^-(f^--f^+) \ud x_1\ud x_2 +\int \int_{\{(x_1,x_2):\mu^->\mu^+  \}} g^+\mu^+(f^--f^+) \ud x_1\ud x_2 \notag\\
&\qquad=\int \int_{\{(x_1,x_2):\mu^-\leq \mu^+  \}} g^+\mu^-(f^--f^+) \ud x_1\ud x_2 +\int \int_{\{(x_1,x_2):\mu^+>\mu^-  \}} g^-\mu^-(f^+-f^-) \ud x_2\ud x_1 \notag\\
&\qquad=\int \int_{\{(x_1,x_2):\mu^-\leq \mu^+  \}} \mu^-(g^+f^-+g^-f^+-g^+f^+-g^-f^-)\ud x_1\ud x_2,
\#
where the third equality follows from symmetry. Similar to \eqref{eq:w01}, we also have 
\#\label{eq:w02}
& \int_{\RR^d}\int_{\RR^d}f^+ \big( 1\land (\mu^-/\mu^+) \big)  \big(g^--g^+  \big)\mu^+ \ud x_1\ud x_2 \notag\\
&  \qquad =\int \int_{\{(x_1,x_2):\mu^-<\mu^+  \}} \mu^-(f^+g^-+f^-g^+-f^+g^+-f^-g^-)\ud x_1\ud x_2.
\#
Combining \eqref{eq:w01} and \eqref{eq:w02}, we obtain
\$
\int_{\RR^d}\int_{\RR^d}g^+ \big( 1\land (\mu^-/\mu^+) \big) (f^--f^+ )\mu^+ \ud x_1\ud x_2=\int_{\RR^d}\int_{\RR^d}f^+  \big( 1\land (\mu^-/\mu^+) \big)  (g^--g^+)\mu^+ \ud x_1\ud x_2,
\$
which is equivalent to 
\$
\int_{\RR^d }\int_{\RR^{d} } g(x_1,x_2)\mathscr{L}^a_s \bigl(f(x_1,x_2)\bigr)\mu(x_1,x_2)\ud x_1\ud x_2=\int_{\RR^d } \int_{\RR^{d} } f(x_1,x_2)\mathscr{L}^a_s \bigl(g(x_1,x_2)\bigr)\mu(x_1,x_2)\ud x_1\ud x_2
\$
by the definition of $\mathscr{L}^a_s$ in \eqref{generator1}. In summary, we obtain that \eqref{eq:3.1} holds for  $\pi$ defined in \eqref{inm}. Thus, $\{Z_t\}_{t\ge 0}$ is a reversible Markov process with the invariant distribution 
$
\ud \pi(x_1,x_2)\propto \exp(-U(x_1)/\tau_1-U(x_2)/\tau_2)\ud x_1\ud x_2.
$ 
\end{proof}

\subsection{Proof of Theorem \ref{prop:2}} \label{B1.5}
\begin{proof}
Recall the infinitesimal generator of the replica exchange Langevin diffusion with swapping intensity $a$ defined in \eqref{generator0}. According to the properties of $\nabla$ and $\Delta$, for $i=1,2$, we have 
\$\nabla_{x_i} f^2(x_1,x_2)&=2f(x_1,x_2)\cdot \nabla_{x_i} f(x_1,x_2)\\
\Delta_{x_i} f^2(x_1,x_2)&=2f(x_1,x_2)\cdot \Delta_{x_i} f(x_1,x_2)+2\|\nabla_{x_i}f(x_1,x_2)\|^2.
\$
Then by the definition of the Carr\'e du Champ operator $ \Gamma^a$ in \eqref{dirichlet}, we have  
\$
&\Gamma^a\bigl(f(x_1,x_2)\bigr)=1/2\cdot \cL^a \big(f(x_1,x_2)^2\big) - f(x_1,x_2)\cdot \cL^a\bigl(f(x_1,x_2)\bigr)   \\
&\quad = {\tau_1 \cdot \bigl\| \nabla_{x_1} f(x_1,x_2) \bigr\|^2+\tau_2 \cdot \bigl\| \nabla_{x_2} f(x_1,x_2) \bigr\|^2} + a/2\cdot s(x_1,x_2) \cdot\big(f(x_2,x_1)-f(x_1,x_2)\big)^2.
\$
Hence, the Dirichlet form $\mathscr{E}^a$  is given by
\$
\mathscr{E}^a(f) &= {\int \Bigl(\tau_1 \cdot \bigl\| \nabla_{x_1} f(x_1,x_2) \bigr\|^2+\tau_2 \cdot \bigl\| \nabla_{x_2} f(x_1,x_2) \bigr\|^2\Bigr) \ud \pi(x_1, x_2)} \notag\\
&\qquad +{\int a/2\cdot s(x_1,x_2) \cdot\big(f(x_2,x_1)-f(x_1,x_2)\big)^2 \ud \pi(x_1, x_2)},
\$ 
which concludes the proof of Theorem \ref{prop:2}.
\end{proof}

\subsection{Proof of Theorem \ref{theorem1}} \label{B2}
\begin{proof}
Note that the replica exchange Langevin diffusion without swapping, that is, $a=0$, shares the same invariant distribution in \eqref{eq:mu} with the ones with swapping, that is, $a>0$. Furthermore, Theorem \ref{prop:2} shows that $\mathscr{E}^a(f)\ge \mathscr{E}^0(f)$ for all $a>0$. Hence it remains to show that the replica exchange Langevin diffusion without swapping satisfies the Poincar\'e inequality in \eqref{eq:w30}. 

Recall that the replica exchange Langevin diffusion without swapping is defined as 
\# \label{langevin}
\ud Z_t^{(1)}=-\nabla U(Z_t^{(1)})\ud t+\sqrt{2\tau_1} \ud W_t^{(1)},\quad
\ud Z_t^{(2)}=-\nabla U(Z_t^{(2)})\ud t+\sqrt{2\tau_2} \ud W_t^{(2)},
\#
and its infinitesimal generator is given by  
\# \label{generator}
\mathscr{L}^0\bigl(f(x_1,x_2)\bigr) &= -\big\la \nabla_{x_1}f(x_1,x_2),\nabla_{x_1} U(x_1)   \big\ra + \tau_1\cdot \Delta_{x_1}f(x_1,x_2) \notag \\
 &\qquad-\big\la \nabla_{x_2}f(x_1,x_2),\nabla_{x_2} U(x_2)   \big\ra+ \tau_2\cdot \Delta_{x_2}f(x_1,x_2).
\# 
The Poincar\'e inequality for the replica exchange Langevin diffusion in \eqref{langevin} is a direct consequence of the existence of a Lyapunov function. In the specific, we have the following lemma, which is adapted from Theorem 1.4 in \cite{bakry2008simple}. 
\begin{lemma}[Lyapunov implies Poincar\'e]  \label{lem1}
Let $\{Z_t\}_{t \ge 0} $ be the replica exchange Langevin diffusion without swapping on $\RR^{2d}$, whose infinitesimal generator is $\mathscr{L}^0$ defined in \eqref{generator}. We further assume that there exists a function $V(x_1,x_2)\ge 1,$ and constants $\lambda>0$, $b\ge0$, and $r>0$ such that
\# \label{lyapnov}
\mathscr{L}^0\bigl(V(x_1,x_2)\bigr)\le -\lambda \cdot V(x_1,x_2)+b\cdot \ind_{\cB_r}(x_1,x_2),
\# 
where $\ind_{\cB_r} $ denotes the indicator function of the centered ball with radius $r$ in $\RR^{2d}$. Then  $\{Z_t\}_{t \ge 0} $ satisfies the Poincar\'e inequality. We call $V(x_1,x_2)$ the Lyapunov function.
\end{lemma}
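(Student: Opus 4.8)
My plan is to reproduce, in the $2d$-dimensional product setting, the Lyapunov-function argument of \cite{bakry2008simple}. The diffusion in \eqref{langevin} is reversible with respect to $\pi$ and has carr\'e du champ $\Gamma^0(f)=\tau_1\|\nabla_{x_1}f\|^2+\tau_2\|\nabla_{x_2}f\|^2$. Since the Poincar\'e inequality \eqref{eq:w30} with $a=0$ is equivalent to the variance bound ${\rm Var}_\pi(f)\le\kappa\,\mathscr{E}^0(f)$ valid for all suitable $f$ — apply \eqref{eq:w30} to $f=\ud\nu/\ud\pi$ and note $\chi^2(\nu\,\|\,\pi)={\rm Var}_\pi(\ud\nu/\ud\pi)$, and conversely normalise a general $f$ — it suffices to produce one admissible $\kappa$ as a function of $\lambda$, $b$ and $r$.

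\textbf{Step 1: turn the drift condition into a second-moment bound.} Dividing \eqref{lyapnov} by $\lambda V$ and using $V\ge 1$, I get the pointwise estimate $1\le\lambda^{-1}\bigl(-\mathscr{L}^0 V/V+b\,\ind_{\cB_r}\bigr)$. Multiplying by $f^2$ and integrating against $\pi$ gives
\[
\int f^2\,\ud\pi\;\le\;\frac1\lambda\int\frac{-\mathscr{L}^0 V}{V}\,f^2\,\ud\pi\;+\;\frac b\lambda\int_{\cB_r}f^2\,\ud\pi .
\]
To control the first integral I would use reversibility, which yields the integration-by-parts identity $\int(-\mathscr{L}^0 V)\,h\,\ud\pi=\int\Gamma^0(V,h)\,\ud\pi$, applied with $h=f^2/V$, together with the pointwise chain-rule inequality
\[
\Gamma^0\!\bigl(V,f^2/V\bigr)=\frac{2f}{V}\,\Gamma^0(V,f)-\frac{f^2}{V^2}\,\Gamma^0(V)\;\le\;\Gamma^0(f),
\]
which is nothing but $\tau_1\bigl\|\nabla_{x_1}f-(f/V)\nabla_{x_1}V\bigr\|^2+\tau_2\bigl\|\nabla_{x_2}f-(f/V)\nabla_{x_2}V\bigr\|^2\ge 0$. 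Hence $\int(-\mathscr{L}^0 V/V)f^2\,\ud\pi\le\mathscr{E}^0(f)$ and
\[
\int f^2\,\ud\pi\;\le\;\frac1\lambda\,\mathscr{E}^0(f)\;+\;\frac b\lambda\int_{\cB_r}f^2\,\ud\pi .
\]

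\textbf{Step 2: a local Poincar\'e inequality and the conclusion.} Next I would apply the last display to $g:=f-m$ with $m:=\pi(f\ind_{\cB_r})/\pi(\cB_r)$, so that $\mathscr{E}^0(g)=\mathscr{E}^0(f)$ and $g$ has zero $\pi$-mean over $\cB_r$. Since $\overline{\cB}_r$ is a bounded connected domain with smooth boundary and the density $\propto\exp(-U(x_1)/\tau_1-U(x_2)/\tau_2)$ is bounded above and below on the compact set $\overline{\cB}_r$ (here continuity of $U$, guaranteed by $L$-smoothness, is used), the classical Poincar\'e inequality on a ball provides a constant $\kappa_r$ with $\int_{\cB_r}g^2\,\ud\pi\le\kappa_r\int_{\cB_r}\Gamma^0(g)\,\ud\pi\le\kappa_r\,\mathscr{E}^0(g)=\kappa_r\,\mathscr{E}^0(f)$, the middle step because $\Gamma^0(g)\ge 0$. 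Combining, $\int g^2\,\ud\pi\le\lambda^{-1}(1+b\kappa_r)\,\mathscr{E}^0(f)$, and since ${\rm Var}_\pi(f)\le\int g^2\,\ud\pi$ (the $\pi$-mean minimises the $L^2(\pi)$ distance to $f$), we obtain \eqref{eq:w30} with $\kappa=(1+b\kappa_r)/\lambda$.

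\textbf{Anticipated difficulties.} The hard part will be the local Poincar\'e inequality on $\cB_r$: one must verify that the restricted and renormalised measure keeps its density pinched between two positive constants — which is exactly where boundedness of $U$ on compacta enters — and then invoke a domain Poincar\'e inequality. A secondary, purely technical issue is the legitimacy of $\int(-\mathscr{L}^0 V)h\,\ud\pi=\int\Gamma^0(V,h)\,\ud\pi$ for an unbounded $V$ and $h=f^2/V$, which I would settle by a truncation/approximation argument or by passing to the extended domain of $\mathscr{L}^0$, following \cite{bakry2008simple}. (To actually deploy Lemma~\ref{lem1} in the proof of Theorem~\ref{theorem1} one still has to exhibit a Lyapunov function; the natural choice is $V(x_1,x_2)=\exp\bigl(c\,(\|x_1\|^2+\|x_2\|^2)\bigr)$ for a small $c>0$, and checking \eqref{lyapnov} is a direct computation using the $(\alpha,\beta)$-dissipativity \eqref{ass2}.)
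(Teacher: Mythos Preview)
Your proposal is correct and follows essentially the same route as the paper's proof: both multiply \eqref{lyapnov} by $f_u^2/(\lambda V)$, integrate, use the pointwise identity $\|\nabla f\|^2-\|\nabla f-(f/V)\nabla V\|^2=\langle\nabla(f^2/V),\nabla V\rangle$ to bound the first term by $\lambda^{-1}\mathscr{E}^0(f)$, and then handle the remaining $b/\lambda\int_{\cB_r}f_u^2\,\ud\pi$ by a local Poincar\'e inequality on the ball after choosing the centering constant to be the $\pi$-mean over $\cB_r$. The only cosmetic differences are that the paper writes the integration-by-parts step via the divergence theorem rather than in terms of $\Gamma^0(V,f^2/V)$, and that it states the local Poincar\'e with $\|\nabla f\|^2$ rather than $\Gamma^0(f)$; since $\tau_1,\tau_2>0$ are fixed these are equivalent up to constants.
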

\begin{proof}
We extend the proof of Theorem 1.4 in \cite{bakry2008simple}, which characterizes the standard Langevin diffusion with a single temperature, to the setting with two temperatures. For completeness, we provide a detailed proof in \S\ref{B3}. 
\end{proof}

When the loss function $U(x)$ satisfies the $(\alpha,\beta)$-dissipative condition in Assumption \ref{ass1}, following \cite{raginsky2017non}, we construct the following Lyapunov function
\$
V(x_1,x_2)=\exp\Bigl\{\alpha/4\cdot \bigl(\|x_1\|^2/\tau_1+\|x_2\|^2/\tau_2\bigr)\Bigr\}. 
\$ 
A direct calculation shows that 
$V(x_1,x_2)$ satisfies the condition in \eqref{lyapnov}. Hence, we apply Lemma \ref{lem1} to obtain that the replica exchange Langevin diffusion without swapping satisfies the Poincar\'e inequality.  
In summary, we prove that the Poincar\'e inequality holds for the replica exchange Langevin diffusion, which concludes the proof.
\end{proof}

\subsection{Proof of Lemma \ref{lem1}}  \label{B3}
\begin{proof} 
We first prove the Poincar\'e inequality in a more general form. For all functions $f(x_1,x_2): \RR^{2d} \to \R$, we show that if there exists a function $V (x_1,x_2) \ge 1$ such that  \eqref{lyapnov} holds, then there exists a constant $\rho$ such that
\$ 
\Var_{\pi}(f)\le \rho\cdot \int \Bigl(\tau_1 \cdot  \bigl\| \nabla_{x_1} f(x_1,x_2) \bigr\|^2+\tau_2 \cdot \bigl \| \nabla_{x_2} f(x_1,x_2)\bigr \|^2 \Bigr)\cdot \ud \pi(x_1,x_2),
\$
where $\tau_1$ and $\tau_2$ are the temperatures, and $\pi(x_1, x_2)$ is the invariant distribution. By setting the function $f$ as the Radon-Nykodim derivative $\ud \nu/\ud \pi  $, we obtain the Poincar\'e inequality for the $\chi^2$-divergence in \eqref{eq:w30}.
Note that for all constants $u\in \RR$, we have \$\Var_{\pi}(f)\le \int \bigl(f(x_1,x_2)-u\bigr)^2\cdot \ud \pi(x_1,x_2),\$ where $\Var_{\pi}(f)$ denotes the variance of $f(X_1,X_2)$ when $ (X_1,X_2)$ follows the invariant distribution $\pi$.  We use $f_u$ to denote the function $f-u$. By multiplying $f_u^2(x_1,x_2)/(\lambda   V(x_1,x_2))$ on the both sides of \eqref{lyapnov} and integrating them over $\pi(x_1,x_2)$, we have 
\# \label{ttt}
\int f_u(x_1,x_2)^2 \cdot \ud \pi(x_1,x_2) &\le \int -{\mathscr{L}^0\bigl(V(x_1,x_2)\bigr)}\big/\bigl({\lambda  V(x_1,x_2)}\bigr)\cdot f_u(x_1,x_2)^2\cdot \ud \pi(x_1,x_2)  \notag \\
&\qquad +\int {b\cdot \ind_{\cB_r}(x_1,x_2)}\big /  \bigl({\lambda  V(x_1,x_2)}\bigr)\cdot f_u(x_1,x_2)^2\cdot \ud \pi(x_1,x_2).
\#  
In the sequel, we upper bound the two terms on the right-hand side of \eqref{ttt}. 

For the first term on the right-hand side of \eqref{ttt}, we invoke the divergence theorem, which states that for a scalar-valued function $g$, whose value at infinity is zero, and a vector-valued function $h$, for the integration under the measure $ \omega$ we have  
\# \label{gauss}
\int \bigl( \la\nabla g, h\ra+  g \cdot \diver h  \bigr) \ud \omega= \int  \diver (g \cdot h)\ud \omega=0.
\#
In \eqref{gauss}, by setting $\ud \omega$ as the Lebesgue measure $ \ud x_1\ud x_2$ and  
\$
g=\exp\bigl(-U(x_1)/\tau_1-U(x_2)/\tau_2\bigr)~~\text{ and }~~ h=f_u(x_1,x_2)^2\big/\bigl(\lambda V(x_1,x_2)\bigr)\cdot \nabla V(x_1,x_2), 
\$ 
 we have 
\#\label{eq:w1000}
&  \int -{\mathscr{L}^0\bigl(V(x_1,x_2)\bigr)}\big/\bigl({\lambda  V(x_1,x_2)} \bigr)\cdot f_u(x_1,x_2)^2 \cdot \ud \pi(x_1,x_2)\notag \\
&\qquad= {\tau_1}/{\lambda}\cdot \int \Bigl \la \nabla_{x_1} \bigl[{f_u(x_1,x_2)^2}/{V(x_1,x_2)}\bigr],\nabla_{x_1}V(x_1,x_2) \Bigr \ra \cdot \ud \pi(x_1,x_2) \notag\\
&\qquad\qquad  +{\tau_2}/{\lambda}\cdot \int \Bigl \la \nabla_{x_2} \bigl[{f_u(x_1,x_2)^2}/{V(x_1,x_2)}\bigr],\nabla_{x_2}V(x_1,x_2) \Bigr \ra  \cdot \ud \pi(x_1,x_2).
\#
Meanwhile, for the first term on the right-hand side of \eqref{eq:w1000} we have 
\#\label{eq:w1001}
&\int \Bigl \la \nabla_{x_1} \bigl[{f_u^2(x_1,x_2)}/{V(x_1,x_2)}\bigr],\nabla_{x_1}V(x_1,x_2) \Bigr \ra\cdot  \ud \pi(x_1,x_2)\notag \\
&\qquad  =\int \Bigl( \bigl \|\nabla_{x_1}f_u(x_1,x_2) \bigr\|^2- \bigl \| \nabla_{x_1}f_u(x_1,x_2)  -\bigl(f_u(x_1,x_2)/V(x_1,x_2)\bigr)\cdot \nabla_{x_1}V(x_1,x_2) \bigr\|^2 \Bigr)\cdot\ud \pi(x_1,x_2) \notag \\
&\qquad \le \int \bigl \|\nabla_{x_1}f_u(x_1,x_2) \bigr\|^2\cdot \ud \pi(x_1,x_2),
\#
where the first equality follows from expanding the gradient of $f_u^2(x_1,x_2)/V(x_1,x_2)$ with respect to $x_1$. By a similar derivation, the same inequality holds for the second term on the right-hand side of \eqref{eq:w1000} as well. Hence, plugging them into \eqref{eq:w1001}, for the first term on the right-hand side of \eqref{ttt} we have 
\# \label{eq6.2}
& \int -{\mathscr{L}^0\bigl(V(x_1,x_2)\bigr)}\big/\bigl({\lambda V(x_1,x_2)} \bigr)\cdot f_u(x_1,x_2)^2 \cdot \ud \pi(x_1,x_2) \notag \\
&\qquad \le  {1}/{\lambda} \cdot \int \Bigl( \tau_1 \bigl\|\nabla_{x_1}f_u(x_1,x_2)\bigr\|^2 +\tau_2 \bigl\|\nabla_{x_2}f_u(x_1,x_2)\bigr \|^2\Bigr)\cdot \ud \pi(x_1,x_2).
\#

For the second term on the right-hand side of \eqref{ttt}, we restrict our attention to the bounded domain $\cB_r $. Based on the assumption that $V(x_1,x_2)\ge 1$, there exists a positive constant $k$ such that
\# \label{xxxx}
&\int {b\cdot \ind_{\cB_r }(x_1,x_2)}\big/  \bigl({\lambda V(x_1,x_2)}\bigr) \cdot f_u(x_1,x_2)^2\cdot \ud \pi(x_1,x_2) \notag \\
&\qquad \le {b}/{\lambda}\cdot\int_{\cB_r }f_u(x_1,x_2)^2\cdot \ud \pi(x_1,x_2)  \notag \\
&\qquad \le k \cdot \int_{\cB_r}\bigl \|\nabla f_u(x_1,x_2) \bigr \|^2 \cdot \ud\pi(x_1,x_2)+\biggl(\int_{\cB_r}f_u(x_1,x_2)\cdot \ud \pi(x_1,x_2)\biggr)^2\bigg/\pi\bigl(\cB_r \bigr),
\#
where the second inequality follows from the Poincar\'e inequality for measures on a bounded domain \citep{bakry2008simple}. Since \eqref{xxxx} holds for all functions $f_u(x_1,x_2)$, one can choose a suitable $u^*$ such that $\int_{\cB_r}f_{u^*} \ud \pi=0  $. Then we have 
\# \label{ggg}
&\int {b\cdot \ind_{\cB_r }(x_1,x_2)}\big/  \bigl({\lambda V(x_1,x_2)}\bigr) \cdot f_u(x_1,x_2)^2\cdot \ud \pi(x_1,x_2)\notag\\
&\quad \le k\cdot \int_{\cB_r}\bigl\|\nabla f_{u^*}(x_1,x_2)\bigr\|^2 \cdot \ud \pi(x_1,x_2).
\#
Finally, by plugging \eqref{eq6.2} and \eqref{ggg} into \eqref{ttt}, we have that there exists a positive constant $\rho$ such that 
\$
\Var_{\pi}(f)&\le \int   f_{u^*}(x_1,x_2)^2\cdot \ud \pi(x_1,x_2)  \\
&\le  \rho\cdot \int \Bigl(\tau_1 \bigl\| \nabla_{x_1} f(x_1,x_2) \bigr\|^2+\tau_2 \bigl\| \nabla_{x_2} f(x_1,x_2) \bigr\|^2 \Bigr)\cdot \ud \pi(x_1,x_2),
\$
which concludes the proof of Lemma \ref{lem1}.
\end{proof}

\subsection{Proof of Theorem \ref{def:3}}\label{B3.5} 

According to Lemma \ref{thm:1}, the replica exchange Langevin diffusion $\{Z_t\}_{t \ge 0} $ is reversible. Hence, we can apply the Donsker-Varadhan theory, which states that for a continuous-time reversible Markov process with invariant distribution $\pi $ and infinitesimal generator $\mathscr{L} $,  LDP holds and its LDP rate function takes the explicit form
\begin{equation}
I(\nu) =\left\{
\begin{array}{lr} 
\Big\|\sqrt{-\mathscr{L}}\bigl(\sqrt{{d\nu}/{d\pi}}\bigr)\Big \|_{\pi}^2, ~~ \text{if} \ \nu \ll \pi,\\
\infty, ~~ \text{otherwise}. 
 \end{array} 
\right.  
\end{equation}
Here $\sqrt{\mathscr{L}}$ denotes the square root of $-\mathscr{L}$, which is defined as follows. 
    
Let $ A $ be a positive semidefinite self-adjoint operator in the Hilbert space $( \cH, \la \cdot,\cdot \ra_{\cH} )  $, the square root of $ A $ is defined as the self-adjoint operator $B$ such that $ A=B^2$, which means $ \la x,Ax \ra_{\cH}=\la x,B^2x \ra_{\cH}=\la Bx,Bx \ra_{\cH} $ for every $ x \in \cH$. As explained in \S\ref{sec:back}, the infinitesimal generator $ -\mathscr{L} $ can be extended to a positive semidefinite self-adjoint operator in the Hilbert space  $L^2(\pi) $. We ignore the slight differences caused by extension. Hence, $\sqrt{-\mathscr{L}}$ is well-defined here.

Recall that the infinitesimal generator $ \mathscr{L}^a $ of the replica exchange Langevin diffusion is given by \eqref{generator0}.
We first derive the explicit form of $\|\sqrt{-\mathscr{L}^a}(f)\|_{\pi}^2$. Since the square root of a positive semidefinite self-adjoint operator is self-adjoint, we have 
\$
\bigl\|\sqrt{-\mathscr{L}^a}(f)\bigr\|_{\pi}^2=\big \la \sqrt{-\mathscr{L}^a}(f),\sqrt{-\mathscr{L}^a}(f)  \big\ra_{\pi}=\big \la f,\sqrt{-\mathscr{L}^a}^2(f) \big \ra_{\pi}=\big \la f,-\mathscr{L}^a (f) \big \ra_{\pi}=-\int f\mathscr{L}^a (f)\ud \pi. 
\$
Since $ \pi $ is the invariant distribution, by Definition \ref{def:2} we have $\int \mathscr{L}^a (f^2)\ud \pi=0 $. Then we have
\$
\bigl\|\sqrt{-\mathscr{L}^a}(f)\bigr\|_{\pi}^2=1/2\cdot\int \bigl(\mathscr{L}^a (f^2)-2f\mathscr{L}^a (f) \bigr)\ud \pi,
\$
which is exactly the Dirichlet form.  Then based on \eqref{generator0}, we obtain
\$
 \mathscr{L}^a (f^2)-2f\mathscr{L}^a (f) &=2 \tau_1\cdot \bigl\| \nabla_{x_1} f(x_1,x_2) \bigr\|^2 + 2\tau_2 \cdot \bigl\| \nabla_{x_2} f(x_1,x_2) \bigr\|^2 \\
 &\qquad + a\cdot s(x_1,x_2) \cdot \bigl(f(x_2,x_1)-f(x_1,x_2)\bigr)^2.
\$
Hence, we have that for probability measures $ \nu\ll \pi$, 
\#  \label{rf}
I^a(\nu) &= \int \tau_1 \bigl\| \nabla_{x_1} \sqrt{{\ud\nu}/{d\pi}(x_1,x_2)} \bigr\|^2+\tau_2 \bigl\| \nabla_{x_2} \sqrt{{\ud\nu}/{\ud\pi}(x_1,x_2)} \bigr\|^2\notag\\
&\qquad+{a}/{2}\cdot s(x_1,x_2) \cdot \bigl(\sqrt{{\ud\nu}/{\ud\pi}(x_2,x_1)}-\sqrt{{\ud\nu}/{\ud\pi}(x_1,x_2)}\bigr)^2 \ud \pi,
\#  
and  $I(\nu)=\infty$, otherwise.

\subsection{Proof of Theorem \ref{thm}} \label{B4}
\begin{proof}
Recall that the temperature swapping Langevin diffusion $\{Z_t\}_{t \ge 0} $ and the continuous-time interpolated process $ \{Z^{\eta}_t\}_{t \ge 0} $ are defined in \eqref{tseq} and \eqref{linearint}. For all $ t \in [0,T]$, we have 
\$    
Z_t-Z^\eta_t=-\int_{0}^{t} \Bigl ( \nabla U(Z_s)-\nabla U\bigl( Z^{\eta}_{\lfloor s/\eta  \rfloor \eta} \bigr) \Bigr) \ud s+\int_{0}^{t} \bigl ( \Sigma_s-\Sigma^{\eta}_{\lfloor s/\eta  \rfloor \eta} \bigr ) \ud W_s.
\$ 
By applying the Cauchy-Schwartz inequality and then taking expectation, we have
\#  \label{eq5.1}
\EE\bigl[\|Z_t-Z^\eta_t\|^2\bigr] &\le 2\cdot \EE \biggl[ \Bigl \| \int_{0}^{t} \Bigl ( \nabla U(Z_s)-\nabla U\bigl( Z^{\eta}_{\lfloor s/\eta  \rfloor \eta} \bigr) \Bigr) \ud s \Bigr \|^2      \biggr ] \notag \\
&\qquad +2\cdot \EE \biggl[ \Bigl \| \int_{0}^{t} \bigl ( \Sigma_s-\Sigma^{\eta}_{\lfloor s/\eta  \rfloor \eta} \bigr ) \ud W_s \Bigr \|^2      \biggr ].
\#
In the sequel, we upper bound the two terms on the right-hand side of \eqref{eq5.1}. 

For the first term, by  Cauchy-Schwarz inequality, we have
\#  \label{eq5.2}
& \EE \biggl[ \Bigl \| \int_{0}^{t} \Bigl ( \nabla U(Z_s)-\nabla U\bigl( Z^{\eta}_{\lfloor s/\eta  \rfloor \eta} \bigr) \Bigr) \ud s \Bigr \|^2   \biggr ]   \notag \\ 
 &\qquad \le t \cdot \EE \biggl[  \int_{0}^{t} \Bigl \|   \nabla U(Z_s)-\nabla U\bigl( Z^{\eta}_{\lfloor s/\eta  \rfloor \eta}\bigr) \Bigr \|^2  \ud s    \biggr ]   \notag   \\
&\qquad \le 2L^2t  \cdot \biggl (  \EE \biggl[ \int_{0}^{t}  \| Z_s-Z^{\eta}_s  \|^2  \ud s \biggr ]+ \EE     \biggl[ \int_{0}^{t}  \bigl \|  Z^{\eta}_s- Z^{\eta}_{\lfloor s/\eta  \rfloor \eta}  \bigr \|^2    \ud s   \biggr ]  \biggr), 
\# 
where the second inequality follows from the $L$-smoothness of $U(\cdot)$ in Assumption \ref{ass1}. In the following, we upper bound the second term on the right-hand side of \eqref{eq5.2}. Note that we have 
\# \label{eq5.4}
\EE \biggl[ \int_{0}^{t}  \bigl \|  Z^{\eta}_s- Z^{\eta}_{\lfloor s/\eta  \rfloor \eta}  \bigl \|^2    \ud s \biggr ]\le \sum_{k=0}^{\lfloor t/\eta \rfloor}\EE\biggl[ \int_{k\eta}^{(k+1)\eta}  \bigl \|  Z^{\eta}_s- Z^{\eta}_{\lfloor s/\eta  \rfloor \eta}  \bigl \|^2  \ud s\biggr]. 
\# 
For all integers $k\ge 0$ and $s \in [k\eta,(k+1)\eta) $, based on the definition of $\{ Z^{\eta}_t \}_{t\ge 0} $ in \eqref{linearint},  we have
\$
 \bigl \|  Z^{\eta}_s- Z^{\eta}_{\lfloor s/\eta  \rfloor \eta} \bigr\|^2 =  \|  Z^{\eta}_s-Z^{\eta}_{k\eta}   \|^2=\Bigl \| -\nabla U(Z^{\eta}_{k\eta} )\cdot (s-k\eta)+\Sigma^{\eta}_{k\eta}\int_{k\eta}^s \ud W_u  \Bigr \|^2. 
\$
By applying the Cauchy-Schwartz inequality again, we obtain 
\# \label{qqq1}
\bigl \|  Z^{\eta}_s- Z^{\eta}_{\lfloor s/\eta  \rfloor \eta}  \bigl \|^2  &\le 2\cdot \bigl \|  \nabla U(Z^{\eta}_{k\eta} )\bigr \|^2\cdot(s-k\eta)^2+2\cdot\Bigl\| \Sigma^{\eta}_{k\eta}\int_{k\eta}^s \ud W_u \Bigr\|^2 \notag \\
 &= 2\cdot \bigl \|  \nabla U(Z^{\eta}_{k\eta})-\nabla U(x^*) \bigr\|^2\cdot(s-k\eta)^2+2\cdot\Bigl\| \Sigma^{\eta}_{k\eta}\int_{k\eta}^s \ud W_u \Bigr\|^2,
 \#
 where $x^*$ is a stationary point of $U(\cdot)$, and by definition $ \nabla U(x^*)=0 $. Recall that by Assumption \ref{ass1}, $ U(\cdot)$ is $L$-smooth. Based on \eqref{qqq1}, we have 
\#  \label{qqq2}
\bigl \|  Z^{\eta}_s- Z^{\eta}_{\lfloor s/\eta  \rfloor \eta}  \bigl \|^2 
&\le 2L^2 \cdot  \| Z^{\eta}_{k\eta}-x^* \|^2\cdot (s-k\eta)^2+2\cdot \Bigl\| \Sigma^{\eta}_{k\eta}\int_{k\eta}^s \ud W_u \Bigr\|^2 \notag \\
& \le 4L^2\cdot  \bigl( \| Z^{\eta}_{k\eta} \|^2+ \|x^*\|^2\bigr)\cdot (s-k\eta)^2+2\cdot \Bigl\| \Sigma^{\eta}_{k\eta}\int_{k\eta}^s \ud W_u \Bigr\|^2.
\#
By integrating the second inequality in \eqref{qqq2} over the interval $[k\eta,(k+1)\eta] $ and taking expectation, then plugging it into the right-hand side of \eqref{eq5.4}, we obtain 
\#  \label{eq5.3}
&\EE \biggl [\int_{k\eta}^{(k+1)\eta} \bigl \|  Z^{\eta}_s- Z^{\eta}_{\lfloor s/\eta  \rfloor \eta}  \bigl \|^2  \ud s\biggr] \notag\\
&\qquad \le 4L^2 \cdot \Bigl( \sup_{k\ge 0}\EE \bigl[\| Z^{\eta}_{k\eta} \|^2 \bigr ]+\|x^*\|^2  \Bigr)\cdot \eta^3/3     
 +2\cdot \int_{k\eta}^{(k+1)\eta} \EE \biggl[\Bigl \| \Sigma^{\eta}_{k\eta}\int_{k\eta}^s \ud W_u \Bigr\|^2 \biggr] \ud s. 
\#
Note that on the right-hand side of \eqref{eq5.3}, $\Sigma^{\eta}_{k\eta}$ is a diagonal matrix with diagonal entries $\sqrt{2\tau_1} $ or $\sqrt{2\tau_2} $, then by It\^o isometry, we have 
\$
\EE \biggl[\Bigl \| \Sigma^{\eta}_{k\eta}\int_{k\eta}^s \ud W_u \Bigr\|^2 \biggr]=\sum_{j=1}^{2d}\EE \biggl[  \Bigl( \Sigma_{k\eta}^{\eta}(j)\int_{k\eta}^s \ud W_u^{(j)} \Bigr )^2 \biggr] \le 4d\tau_2(s-k\eta),
\$
where $ \Sigma_{k\eta}^{\eta}(j)  $ denotes the $j$-th diagonal entry of the matrix $ \Sigma_{k\eta}^{\eta} $, and $ W_u^{(j)} $ is the $j$-th component of the $2d$-dimensional Brownian motion $ W_u $. Hence,  we have 
\# \label{qqq3}
\int_{k\eta}^{(k+1)\eta} \EE \biggl[\Bigl \| \Sigma^{\eta}_{k\eta}\int_{k\eta}^s \ud W_u \Bigr\|^2 \biggr] \ud s \le  \int_{k\eta}^{(k+1)\eta} 4d\tau_2(s-k\eta)\ud s=2d\tau_2\eta^2,
\# 
which provides an upper bound for the right-hand side of \eqref{eq5.3}.
Then by plugging \eqref{qqq3} into \eqref{eq5.3}, we obtain
\#  \label{eq5.5}
\EE \biggl[  \int_{k\eta}^{(k+1)\eta}  \bigl \|  Z^{\eta}_s- Z^{\eta}_{\lfloor s/\eta  \rfloor \eta}  \bigr\|^2  \ud s \biggr] \le 4L^2 \cdot  \Bigl( \sup_{k\ge 0}\EE\bigl[ \| Z^{\eta}_{k\eta} \|^2 \bigr]+\|x^*\|^2  \Bigr)\cdot \eta^3+4d \tau_2\eta^2.
\#

Then by plugging \eqref{eq5.5} into \eqref{eq5.4}, we have 
\#  \label{zz2}
\EE \biggl[  \int_{0}^{t}  \bigl \|  Z^{\eta}_s- Z^{\eta}_{\lfloor s/\eta  \rfloor \eta}  \bigr \|^2    \ud s \biggr ]&\le (1+{t}/{\eta}) \cdot \biggl(4L^2 \cdot \Bigl( \sup_{k\ge 0}\EE\bigl[ \| Z^{\eta}_{k\eta} \|^2 \bigr]+\|x^*\|^2  \Bigr)\cdot \eta^3+4d\tau_2\eta^2 \biggr).  
\#
The following lemma shows that if the discretization stepsize $\eta $ falls into the interval $(0,\alpha/L^2)$,  $ \{Z^{\eta}_{k\eta}\}_{k \ge 0}$ are uniformly upper bounded in the $L^2$ sense. 
\begin{lemma} \label{lem} If $0< \eta < \alpha/L^2$, there exists a constant $\delta_1(d,\tau_2,L,\alpha,\beta)$, which depends on the dimension $d$, the temperature parameter $\tau_2 $, and the smoothness constant $L $ and dissipative constants $(\alpha,\beta)$ of $U(\cdot)$,  such that 
\$
\sup_{k\ge 0}\EE\bigl[\|Z^{\eta}_{k\eta} \|^2\bigr] \le \delta_1(d,\tau_2,L,\alpha,\beta).
\$
\end{lemma}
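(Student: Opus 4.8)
The plan is to obtain a one-step second-moment recursion directly from the Euler update \eqref{alg1} and then iterate it. The temperature-swapping rule \eqref{alg2} only decides which of $\tau_1,\tau_2$ scales the Gaussian increment at each step and never alters the positions, so it suffices to bound each block $\|Z^{(i)}(k)\|^2$ after the crude replacement $\tau^{(i)}(k)\le\tau_2$ and then add the two blocks at the end. Writing $\mathcal{F}_k$ for the $\sigma$-field generated by $Z(0),\dots,Z(k)$ and $\tau(0),\dots,\tau(k)$, I expand $\|Z^{(i)}(k+1)\|^2$ using \eqref{alg1} and take $\EE[\,\cdot\mid\mathcal{F}_k]$; the cross term carrying $\xi^{(i)}(k)$ vanishes because $\xi^{(i)}(k)$ is centered and independent of $\mathcal{F}_k$, and $\EE[\|\xi^{(i)}(k)\|^2]=d$, which gives
\$
\EE\bigl[\|Z^{(i)}(k+1)\|^2\mid\mathcal{F}_k\bigr]=\|Z^{(i)}(k)\|^2-2\eta\bigl\la Z^{(i)}(k),\nabla U(Z^{(i)}(k))\bigr\ra+\eta^2\bigl\|\nabla U(Z^{(i)}(k))\bigr\|^2+2\eta\,\tau^{(i)}(k)\,d.
\$

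Next I invoke Assumption \ref{ass1}. Dissipativity \eqref{ass2} bounds the inner-product term by $-2\eta\la Z^{(i)}(k),\nabla U(Z^{(i)}(k))\ra\le-2\eta\alpha\|Z^{(i)}(k)\|^2+2\eta\beta$; for the squared gradient I use $L$-smoothness together with a stationary point $x^*$ of $U$, which exists under dissipativity and satisfies $\|x^*\|^2\le\beta/\alpha$ (since $0=\la x^*,\nabla U(x^*)\ra\ge\alpha\|x^*\|^2-\beta$), so that $\|\nabla U(x)\|=\|\nabla U(x)-\nabla U(x^*)\|\le L\|x-x^*\|$ and hence $\|\nabla U(x)\|^2\le 2L^2\|x\|^2+2L^2\beta/\alpha$. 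Together with $\tau^{(i)}(k)\le\tau_2$ this yields
\$
\EE\bigl[\|Z^{(i)}(k+1)\|^2\mid\mathcal{F}_k\bigr]\le\bigl(1-2\eta\alpha+2\eta^2L^2\bigr)\|Z^{(i)}(k)\|^2+\eta\bigl(2\beta+2\eta L^2\beta/\alpha+2\tau_2 d\bigr).
\$
The hypothesis $\eta<\alpha/L^2$ is exactly what forces the contraction factor $c:=1-2\eta\alpha+2\eta^2L^2$ to be strictly less than $1$, being equivalent to $2\eta^2L^2<2\eta\alpha$; should $c$ be negative, nonnegativity of $\|Z^{(i)}(k)\|^2$ only improves the bound, so I may replace $c$ by $c_+:=\max(c,0)\in[0,1)$, while the additive constant $C:=2\beta+2\eta L^2\beta/\alpha+2\tau_2 d$ stays below $4\beta+2\tau_2 d$ uniformly over $\eta<\alpha/L^2$.

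Finally I take total expectations and unroll: $\EE[\|Z^{(i)}(k)\|^2]\le c_+^k\|Z^{(i)}(0)\|^2+\eta C\sum_{j=0}^{k-1}c_+^j\le\|Z^{(i)}(0)\|^2+\eta C/(1-c_+)$, which is uniform in $k$. Summing over $i=1,2$ and using $Z^{\eta}_{k\eta}=Z^{\eta}(k)=(Z^{(1)}(k),Z^{(2)}(k))$ with the fixed initialization $Z_0$ gives a bound depending only on $d,\tau_2,L,\alpha,\beta$, which is the asserted $\delta_1$. There is no genuine obstacle beyond bookkeeping; the single point worth flagging is that the argument hinges on the contraction factor staying below $1$ and on the additive term being $k$-independent, both guaranteed precisely by $\eta<\alpha/L^2$, and that temperature swapping is harmless because it leaves positions unchanged and only replaces the effective noise level by something no larger than $\tau_2$.
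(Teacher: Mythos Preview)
Your proposal is correct and follows essentially the same route as the paper's proof: expand the Euler update, use independence of the Gaussian increment to kill the cross term, bound the temperature by $\tau_2$, apply dissipativity to the inner product and smoothness (via a stationary point $x^*$) to the squared gradient, and iterate the resulting contractive recursion with factor $1-2\eta\alpha+2\eta^2L^2<1$. The only cosmetic differences are that you work block by block and sum, whereas the paper works directly on the full $2d$-vector, and you additionally note $\|x^*\|^2\le\beta/\alpha$, which the paper leaves implicit.
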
   
\begin{proof}
The proof idea is to show that  the sequence $\{\EE[\|Z^{\eta}_{k\eta} \|^2]\}_{k\ge 0} $ satisfies a contractive inequality based on the discretization scheme defined in \eqref{alg1}.  See \S\ref{B5} for a detailed proof.
\end{proof}
By applying Lemma \ref{lem} to \eqref{zz2}, we have that there exists a constant $\delta_2(d,\tau_2,L,\alpha,\beta) $ such that
\$ 
\EE \biggl[  \int_{0}^{t}  \bigl \|  Z^{\eta}_s- Z^{\eta}_{\lfloor s/\eta  \rfloor \eta}  \bigr \|^2    \ud s \biggr ] \le \delta_2(d,\tau_2,L,\alpha,\beta) \cdot \eta.
\$
Then based on \eqref{eq5.2}, we obtain the following inequality
\# \label{eq5.6}
&  \EE \biggl[ \Bigl \| \int_{0}^{t} \Bigl ( \nabla U(Z_s)-\nabla U\bigl( Z^{\eta}_{\lfloor s/\eta  \rfloor \eta} \bigr) \Bigr) \ud s \Bigr \|^2      \biggr ] \notag \\
&\qquad \le 2L^2t  \cdot \biggl (  \EE \biggl[ \int_{0}^{t}  \| Z_s-Z^{\eta}_s  \|^2  \ud s \biggr ]+ \delta_2(d,\tau_2,L,\alpha,\beta) \cdot \eta  \biggr),
\#
which establishes an upper bound for the first term on the right-hand side of \eqref{eq5.1}.

It remains  to upper bound the second term on the right-hand side of \eqref{eq5.1}. According to It\^o isometry, we have 
\# \label{zz0}
\EE \biggl[ \Bigl \| \int_{0}^{t} \bigl( \Sigma_s-\Sigma^{\eta}_{\lfloor s/\eta  \rfloor \eta} \bigr ) \ud W_s \Bigr \|^2  \biggr]
& = \sum_{j=1}^{2d} \int_{0}^{t} \EE \Bigl [ \bigl(  \Sigma_{s}(j)-\Sigma^{\eta}_{\lfloor s/\eta  \rfloor \eta}(j) \bigr )^2  \Bigr] \ud s \notag   \\
& \le \sum_{j=1}^{2d}  \sum_{k=0}^{\lceil  t/\eta \rceil} \int_{k\eta}^{(k+1)\eta} \EE \Big [ \big(  \Sigma_{s}(j)-\Sigma^{\eta}_{k\eta}(j) \bigr )^2  \Big] \ud s,
\# 
where $\Sigma_{s}(j) $ and $\Sigma^{\eta}_{k\eta}(j)  $ are the $j$-th diagonal entries of $\Sigma_{s}$ and $\Sigma^{\eta}_{k\eta} $, respectively.
Recall that $ \Sigma_{s} $ and $\Sigma_{s}^{\eta} $ are diagonal matrices with all diagonal entires being $\sqrt{2\tau_1} $ or $\sqrt{2\tau_2} $. Then for all $j$, $k$, and possible realizations of $\Sigma_{s}(j)$ and $\Sigma^{\eta}_{k\eta}(j)$, we have 
\#  \label{zz3}
\int_{k\eta}^{(k+1)\eta} \EE \Big [ \big(  \Sigma_{s}(j)-\Sigma^{\eta}_{k\eta}(j) \bigr )^2  \Big] \ud s = 4(\sqrt{\tau_2}-\sqrt{\tau_2})^2\cdot \int_{k\eta}^{(k+1)\eta} \PP \bigl(\Sigma_{s}(j)\ne \Sigma^{\eta}_{k\eta}(j) \bigr)\ud s.
 \# 
To upper bound the probability on the right-hand side of \eqref{zz3}, we take the expectation conditioning $Z_k^\eta $, which yields
\# \label{zz4}
\int_{k\eta}^{(k+1)\eta} \PP \bigl(\Sigma_{s}(j)\ne \Sigma^{\eta}_{k\eta}(j) \bigr)\ud s =  \EE \biggl[\int_{k\eta}^{(k+1)\eta}  \PP \bigl(\Sigma_{s}(j)\ne \Sigma^{\eta}_{k\eta}(j) \,\big|\, Z^{\eta}_{k\eta}\bigr)\ud s \biggr].
\#  
Recall that the rate of swapping is specified in \eqref{eq:11} and \eqref{eq:w250}. The conditional probability on the right-hand side of \eqref{zz4} satisfies 
\#\label{zz5}
\PP \bigl(\Sigma_{s}(j)\ne \Sigma^{\eta}_{k\eta}(j) \,\big|\, Z^{\eta}_{k\eta}\bigr)= a\cdot s\bigl(Z^{\eta(1)}_{k\eta},Z^{\eta(2)}_{k\eta}\bigr)\cdot (s-k\eta)+o(s-k\eta),
\#
where $Z^{\eta(1)}_{k\eta},Z^{\eta(2)}_{k\eta} $ are the first and second components of $Z^{\eta}_{k\eta}$, $s(\cdot,\cdot)$ is defined in \eqref{eq:11}, and $o(\cdot)$ is the little-$o$ notation, which denotes the higher-order term with respect to $s-k\eta$. Hence, by combining \eqref{zz3}-\eqref{zz5}, we have that there exists a constant $  \delta_3(\tau_1,\tau_2,a) $  such that 
\$
\int_{k\eta}^{(k+1)\eta} \EE \Big [ \big(  \Sigma_{s}(j)-\Sigma^{\eta}_{k\eta}(j) \bigr )^2  \Big] \ud s &=  4(\sqrt{\tau_2}-\sqrt{\tau_2})^2\cdot  \int_{k\eta}^{(k+1)\eta} \bigl( a\cdot (s-k\eta)+o(s-k\eta)\bigr)  \ud s \\
& \le \delta_3(\tau_1,\tau_2,a) \cdot \eta^2.
\$
Then based on \eqref{zz0}, there exists a constant $\delta_4(d,\tau_1,\tau_2,a)$ such that  
\# \label{eq5.7}
\EE \biggl[ \Bigl \| \int_{0}^{t} \bigl( \Sigma_s-\Sigma^{\eta}_{\lfloor s/\eta  \rfloor \eta} \bigr ) \ud W_s \Bigr \|^2  \biggr ] \le 2d\cdot \bigl(\lceil t/\eta \rceil + 1\bigr) \cdot \delta_3(\tau_1,\tau_2,a) \cdot \eta^2 \le \delta_4(d,\tau_1,\tau_2,a)\cdot t\cdot \eta.
\#

Finally, by plugging \eqref{eq5.6} and \eqref{eq5.7} into \eqref{eq5.1}, we have
\$
\EE\bigl[\| Z_t-Z^\eta_t\|^2\bigr] \le 4L^2t\cdot   \biggl(  \int_{0}^t\EE\bigl[\| Z_s-Z^\eta_s\|^2\bigr]\ud s+\delta_2(d,\tau_2,L,\alpha,\beta) \cdot \eta  \biggr)
+2\cdot \delta_4(d,\tau_1,\tau_2,a)\cdot t\cdot \eta.
\$
Hence, by applying the Gr\"onwall's inequality \citep{dragomir2003some}, we have that there exists a constant $\gamma(d,\tau_1,\tau_2, a, L,\alpha,\beta,T)$ such that for all $t\in [0,T]$,
\$
\EE\bigl[\| Z_t-Z^\eta_t\|^2\bigr]\le \gamma(d,\tau_1,\tau_2, a, L,\alpha,\beta,T)\cdot  \eta.
\$
In other words, the mean squared error of discretization grows linearly with respect to the stepsize $\eta$, which concludes the proof of Theorem \ref{thm}. 
\end{proof}

\subsection{Proof of Lemma \ref{lem}}  \label{B5}
\begin{proof}
Recall that for all integers $k\ge 0$, based on the discretization scheme in \eqref{alg1} and \eqref{alg2}, for $i \in \{1,2\}$, we have  
\$
Z^{(i)}(k+1)=Z^{(i)}(k)-\eta\cdot \nabla U\bigl(Z^{(i)}{(k})\bigr)+\sqrt{2\eta\cdot \tau^{(i)}(k)}\cdot \xi^{(i)}(k),
\$ 
where $\xi^{(i)}(k)$ is a standard $d$-dimensional Gaussian random vector and the temperature  $\tau^{(i)}(k)$ takes value in $\{\tau_1, \tau_2\}$. Also note that by the definition of the continuous-time interpolated process $ \{Z^{\eta}_t\}_{t\ge 0} $ in \eqref{linearint}, we have $ Z(k)=Z^{\eta}_{k\eta} $. Hence, we have  
\#  \label{sss}
\EE\bigl[\|Z^{\eta}_{(k+1)\eta}\|^2\bigr]&=\EE\Bigl[\bigl\|Z^{\eta}_{k\eta}-\eta\cdot \nabla U(Z^{\eta}_{k\eta})\bigr\|^2\Bigr]+\EE\bigl[2\eta \cdot \tau^{(1)}(k) \cdot \|\xi^{(1)}(k)\|^2 \bigr]+\EE\bigl[2\eta \cdot \tau^{(2)}(k) \cdot \|\xi^{(2)}(k)\|^2 \bigr]\notag \\
&\qquad +2\cdot \EE\Bigl[\bigl \la Z^{\eta}_{(k+1)\eta}-\eta\cdot\nabla U(Z^{\eta}_{k\eta}), \sqrt{2\eta\cdot \Sigma^{\eta}_{k\eta}}\cdot\xi(k) \bigr\ra\Bigr].
\# 
Since $\xi(k)$ is independent of $Z_{k\eta}^{\eta}$ and by \eqref{alg2} the distribution of $\tau(k)$ is only determined by $ Z_{k\eta}^{\eta} $, the last term on the right-hand side of \eqref{sss} is zero. Moreover, note that each component of $\tau(k)$ only takes value in $\{\tau_1, \tau_2\}$ and $\tau_1<\tau_2$. Then by \eqref{sss} we have 
\#  \label{lem.eq.1}
\EE\bigl[\|Z^{\eta}_{(k+1)\eta} \|^2\bigr]\le \EE\Bigl[\bigl\|Z^{\eta}_{k\eta}-\eta\cdot \nabla U(Z^{\eta}_{k\eta})\bigr\|^2\Bigr]+4d \eta \tau_2. 
\#
Recall that $U(\cdot) $ is $(\alpha,\beta)$-dissipative and $L$-smooth by Assumption \ref{ass1}. Then we have 
\$
\EE\Bigl[\bigl\|Z^{\eta}_{k\eta}-\eta\cdot \nabla U(Z^{\eta}_{k\eta})\bigr\|^2\Bigr]&=\EE\bigl[\|Z^{\eta}_{k\eta} \|^2\bigr]-2\eta\cdot\EE\Bigl[\bigl \la Z^{\eta}_{k\eta}, \nabla U(Z^{\eta}_{k\eta})\bigr\ra\Bigr]+\eta^2\cdot \EE\Bigl[\bigl\|\nabla U(Z^{\eta}_{k\eta})\bigr\|^2\Bigr] \\
&\le (1-2\alpha \eta+2\eta^2L^2)\cdot \EE\bigl[\|Z^{\eta}_{k\eta} \|^2\bigr]+2\eta \beta+2\eta^2L^2\cdot \| x^*\|^2,
\$
where $x^*$ is a stationary point of $U(\cdot)$. Hence, based on \eqref{lem.eq.1}, we have 
\#  \label{zz1}
\EE\Bigl[\bigl\|Z^{\eta}_{(k+1)\eta}\bigr\|^2\Bigr]\le (1-2\alpha \eta+2\eta^2L^2)\cdot \EE\bigl[\|Z^{\eta}_{k\eta} \|^2\bigr]+2( \beta+d\tau_2)\eta+2\eta^2L^2\| x^*\|^2.
\#
Note that when $ \eta \in (0, \alpha/L^2)$, we have $ 1-2\alpha \eta+2\eta^2L^2<1 $. Hence, according to \eqref{zz1}, there exists a constant $\delta_1(d,\tau_2,L,\alpha,\beta)$ such that 
\$
\sup_{k\ge 0}\EE\bigl[\|Z^{\eta}_{k\eta} \|^2\bigr] \le \delta_1(d,\tau_2,L,\alpha,\beta),
\$
which concludes the proof of Lemma \ref{lem}.
\end{proof}
\section{Detailed Information of Experiments  }  \label{DINE}
In this section, we introduce the detailed information of our numerical experiments. In our experiments, we set the the objective function $U(x)$ as a negative mixture of 25 two-dimensional Gaussian probability density functions, which is defined as following
\$
U(x_1,x_2)=-\sum_{i=1}^{25} w_i \cdot 1/{(2\pi\kappa)} \cdot \exp\Bigl\{-1/(2\kappa) \cdot \Bigl(\bigl(x_1-\xi^{(1)}_i\bigr)^2+\bigl(x_2-\xi^{(2)}_i\bigr)^2\Bigr) \Bigr \}.
\$
Here  $\xi_i=(\xi^{(1)}_i,\xi^{(2)}_i) $ is the center of the $i$-th Gaussian probability density function and $w_i$ is corresponding weight.  Note that we set all of the covariance matrices as $\kappa \cdot I_2$, where $I_2$ denotes the two-dimensional identity matrix and $\kappa$ is a positive constant. In our experiments, we choose $\xi_i$ as $(0, 0), (0, 1), \ldots, (0, 4), \ldots, (4, 0)$, $(4, 1), \ldots, (4, 4)$ successively, and their correspond weights as $1/210, 1/210, \cdots, 20/210$. With these settings, one can easily show that $U(x_1,x_2) $ is nonconvex and  possesses multiple local minima.   

In our experiments, we compare the performances of a standard Langevin diffusion with a low temperature $ \tau_1 $, a standard Langevin diffusion with a high temperature $ \tau_2 $, and a replica exchange Langevin diffusion with temperatures $\tau_1 $ and $\tau_2$. Note that for the replica exchange Langevin diffusion, we only need to track the objective values of the particle corresponding to the low temperature. For each fixed experimental setting, we run 10000 iterations in total and keep track of the smallest objective value obtained so far. We use two sets of experiments to demonstrate our theory in \S\ref{TA}. In the first set, we fix the objective function $U(x)$ by setting $\kappa=0.1$ and let the temperatures $(\tau_1,\tau_2)$ vary in $\{(0.01,0.5),(0.01,1),(0.1,1)\} $. In addition, we choose the swapping intensity $a=1$ and the stepsize $\eta=0.01$. The results are plotted in Figure \ref{f1}. While in the second set, we fix the temperatures as $(\tau_1,\tau_2)=(0.01,1)$ and change the objective function $U(x)$ by setting $\kappa=0.05,0.1,0.2,0.3$. In this case, we choose the swapping intensity $a=1$ and the stepsize $\eta=1$. The results are plotted in Figure \ref{f2}. Note that in each setting, the replica exchange Langevin diffusion outperforms the standard Langevin diffusion driven by a single temperature.

\begin{figure*}
\centering
\begin{tabular}{cc}
\includegraphics[width=0.45\textwidth]{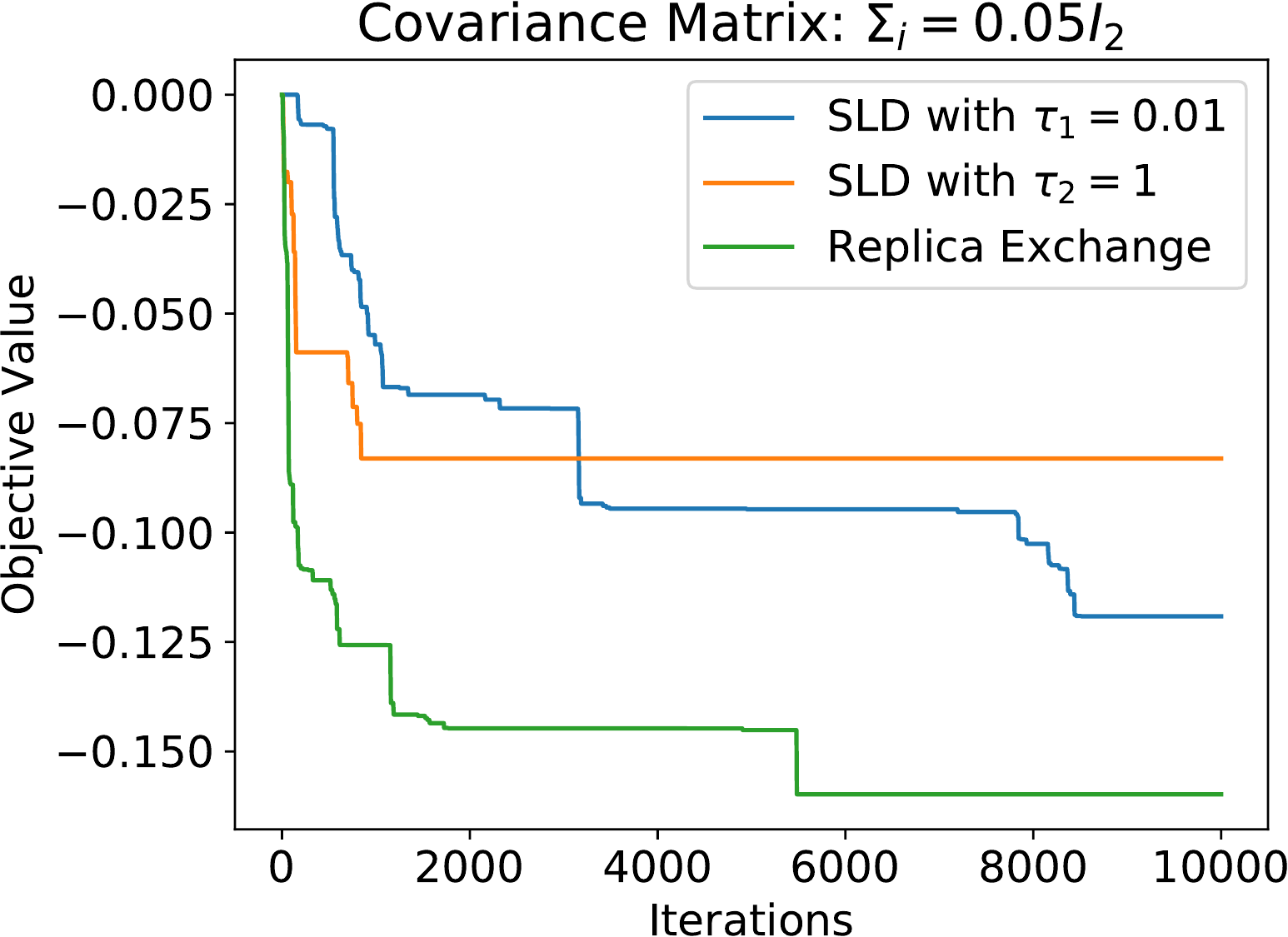}
&
\includegraphics[width=0.45\textwidth]{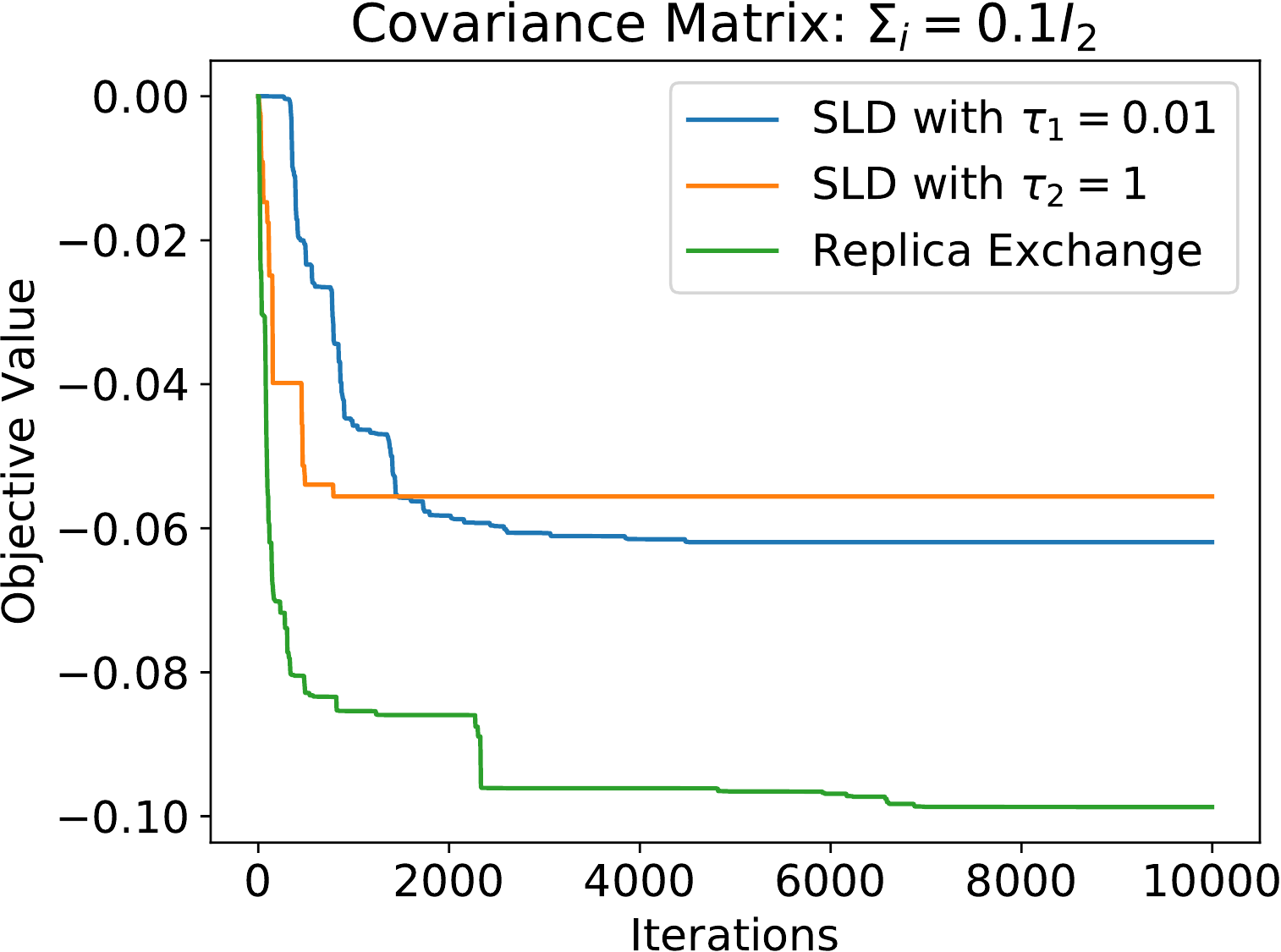}\\
(a) & (b)   \\
\includegraphics[width=0.45\textwidth]{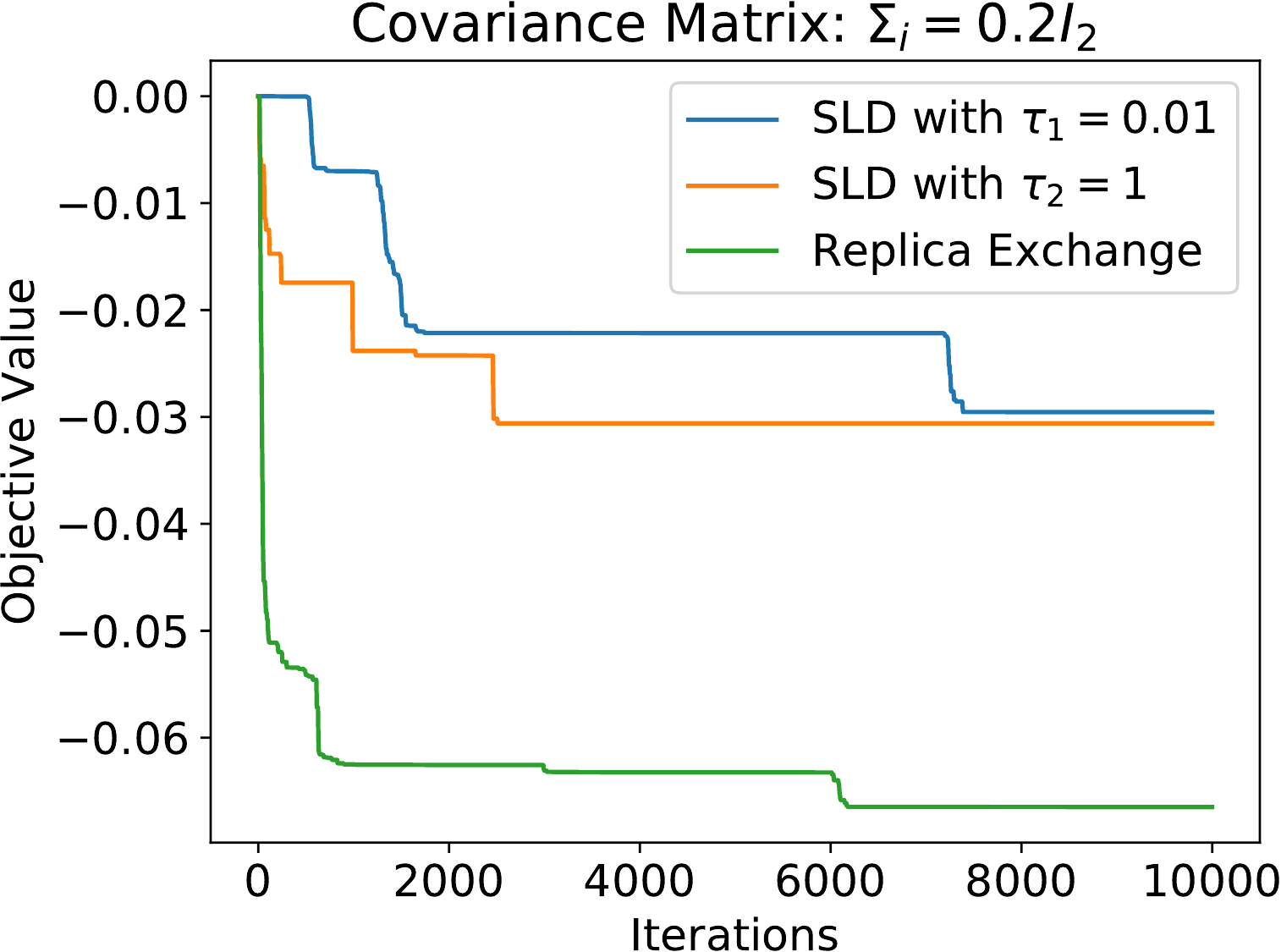}
&
\includegraphics[width=0.45\textwidth]{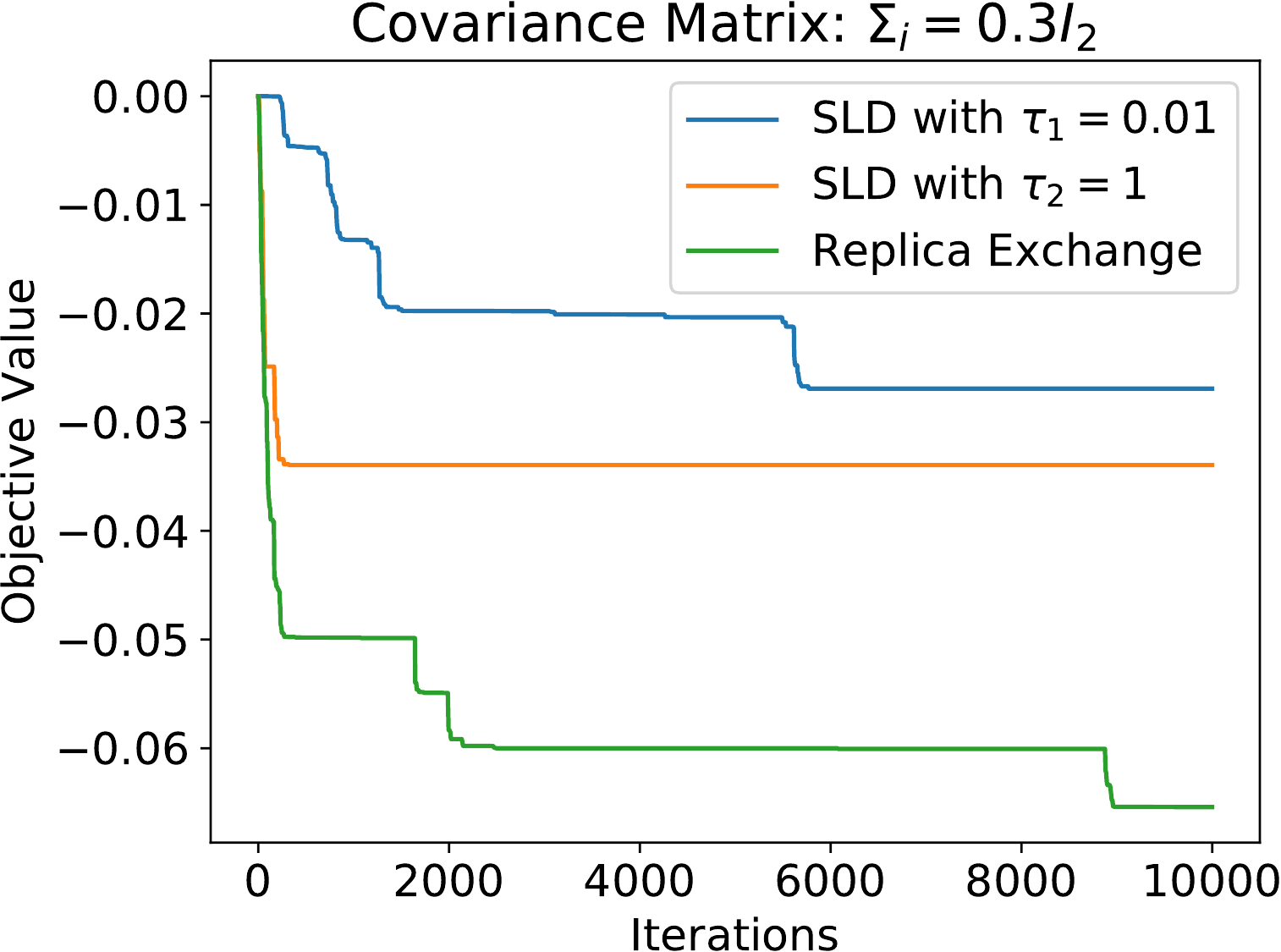}\\
(c)&(d)
\end{tabular}
\caption{\small Performance of three algorithms for fixed temperatures with different objective functions.}
\label{f2}
\end{figure*}

\bibliographystyle{ims}
\bibliography{graphbib}


\end{document}